\crefname{figure}{Fig.}{Figs.}
\crefname{equation}{Eq.}{Eqs.}
\crefname{proposition}{Proposition}{Propositions}
\newtheorem{assume}{Assumption}
\crefname{assume}{Assumption}{Assumptions}
\newtheorem{lemma}{Lemma}
\crefname{lemma}{Lemma}{Lemmas}
\newtheorem{thm}{Theorem}
\crefname{thm}{Theorem}{Theorems}
\def\blinded{}
\newcommand{\blind}[2]{\ifdefined\blinded#2\else#1\fi}
\newcommand{\E}{\ensuremath{\mathbb{E}}}    
\newcommand{\Tau}{\mathcal{T}}              
\newcommand{\abs}[1]{\lvert#1\rvert}
\newcommand{\norm}[1]{\lVert#1\rVert}
\DeclareMathOperator*{\argmax}{arg\,max}    
\DeclareMathOperator*{\argmin}{arg\,min}    
\newcommand{\States}{\ensuremath{\mathcal{S}}}
\newcommand{\Actions}{\ensuremath{\mathcal{A}}}
\newcommand{\Data}{\ensuremath{\mathcal{D}}}
\newcommand{\ie}{\textit{i.e.}}             
\newcommand{\eg}{\textit{e.g.}}             
\newcommand{\apriori}{\textit{a priori}}    
\newcommand{\wrt}{\text{w.r.t.}}            
\renewcommand{\hl}[1]{{#1}}
\newcommand{\intermargin}{d}
\newcommand{\intramargin}{\gamma}
\let\oldparagraph\paragraph
\def\paragraph{\vspace{-1.0em}\oldparagraph}
\title{LiMIIRL: Lightweight Multiple-Intent Inverse Reinforcement Learning}
\author{%
    Aaron J.~Snoswell \\
    School of Information Technology and Electrical Engineering \\
    The University of Queensland \\
    \href{mailto:a.snoswell@uq.edu.au}{a.snoswell@uq.edu.au} \\
    \And
    Surya P.~N.~Singh \\
    Intuitive Surgical \\
    \href{mailto:surya.singh@intusurg.com}{surya.singh@intusurg.com} \\
    \And
    Nan Ye \\
    School of Mathematics and Physics \\
    The University of Queensland \\
    \href{mailto:nan.ye@uq.edu.au}{nan.ye@uq.edu.au} \\
}
\begin{document}

\maketitle

\begin{abstract}
    Multiple-Intent Inverse Reinforcement Learning (MI-IRL) seeks to find a reward function ensemble to rationalize demonstrations of different but unlabelled intents. 
    
    \indent Within the popular expectation maximization (EM) framework for learning probabilistic MI-IRL models, we present a warm-start strategy based on up-front clustering of the demonstrations in feature space.
    Our theoretical analysis shows that this warm-start solution produces a near-optimal reward ensemble, provided the behavior modes satisfy mild separation conditions.
    We also propose a MI-IRL performance metric that generalizes the popular Expected Value Difference measure to directly assesses learned rewards against the ground-truth reward ensemble.
    Our metric elegantly addresses the difficulty of pairing up learned and ground truth rewards via a min-cost flow formulation, and is efficiently computable.
    
    \indent We also develop a MI-IRL benchmark problem that allows for more comprehensive algorithmic evaluations.
    On this problem, we find our MI-IRL warm-start strategy helps avoid poor quality local minima reward ensembles, resulting in a significant improvement in behavior clustering.
    Our extensive sensitivity analysis demonstrates that the quality of the learned reward ensembles is improved under various settings, including cases where our theoretical assumptions do not necessarily hold.
    Finally, we demonstrate the effectiveness of our methods by discovering distinct driving styles in a large real-world dataset of driver GPS trajectories.
\end{abstract}

\section{Introduction}
\label{sec:introduction}

Inverse Reinforcement Learning (IRL) methods search for a reward function to rationalize demonstrated behaviour \cite{Russell1998}.
However, in many cases, these demonstrations consist of different but unobserved behavior modes.
For example, taxi drivers may choose different routes even when the pickup location and the destination are the same, upon the passenger's request to minimize the cost or the time (\cref{fig:porto-multimodal}, left).
Main approaches to deal with such multiple (unobserved) intent IRL (MI-IRL) problems are Expectation Maximization (EM) \cite{Babes-Vroman2011,Ramponi2020} or non-parametric methods \cite{Choi2012,Almingol2015}.
We focus on the EM formulation, which requires the number of intents $K$ as a hyper-parameter, but is computationally more practical.
Unfortunately, EM algorithms easily get stuck in poor local minima, and thus good initialization is often a subject of algorithmic design and theoretical analysis \cite{Vlassis2002,Balakrishnan2017}.

For MI-IRL, we similarly observed that the EM algorithm is prone to local minima that may have good likelihood, but fail to learn good reward functions for the demonstrated behaviors.
To address this, we propose Lightweight Multiple-Intent IRL (LiMIIRL), a simple warm-start strategy that constructs an initial solution by clustering the demonstrations in feature space, and then uses EM to improve the solution.
Theoretically, we bound the error of the clustering-based initial solution,
which shows that the initial solution can be near-optimal when the behaviors are
approximately separable in the feature space. 
To the best of our knowledge, this is the first provable performance guarantee for an EM initialization in the MI-IRL context.
Empirically, our experiments show that LiMIIRL helps avoid poor quality solutions, 
and allows terminating the EM loop after a very small number of iterations, significantly speeding up the convergence of the EM algorithm as compared to random initializations.
These results show that the simple strategy of clustering in the feature space can effectively help grouping demonstrations into clusters with distinct reward objectives.

Another contribution of this paper is a performance metric for MI-IRL
algorithms (\cref{par:min-cost-flow-metric}). 
Our metric allows direct comparison (without reference to a dataset) between two sets of multiple-intent behaviors, each represented by an ensemble of reward functions and is efficiently computable by solving a minimum cost flow problem.

In the remainder of this paper, we present LiMIIRL with a theoretical analysis (\cref{sec:LiMIIRL}) and introduce a new MI-IRL metric (\cref{par:min-cost-flow-metric}).
We demonstrate the effectiveness of LiMIIRL and analyse its sensitivity 
\wrt{} problem and algorithm hyper-parameters on a new MI-IRL benchmark,
which allows for more comprehensive testing of MI-IRL algorithms
(\cref{sec:element-world}). 
LiMIIRL is also shown to effectively discover distinct driving behaviours in a 
real-world dataset of taxi driving trajectories (\cref{sec:porto-problem}). We elaborate context in both the background
 and related work (\cref{sec:background,sec:related-work}).
Our code is available at \blind{\url{https://github.com/aaronsnoswell/multimodal-irl}}{[URL removed for blind review]}.

\section{Background}
\label{sec:background}

\begin{figure}[t]
    \centering
    \begin{subfigure}{.3\columnwidth}
        \centering
        \includegraphics[width=\linewidth]{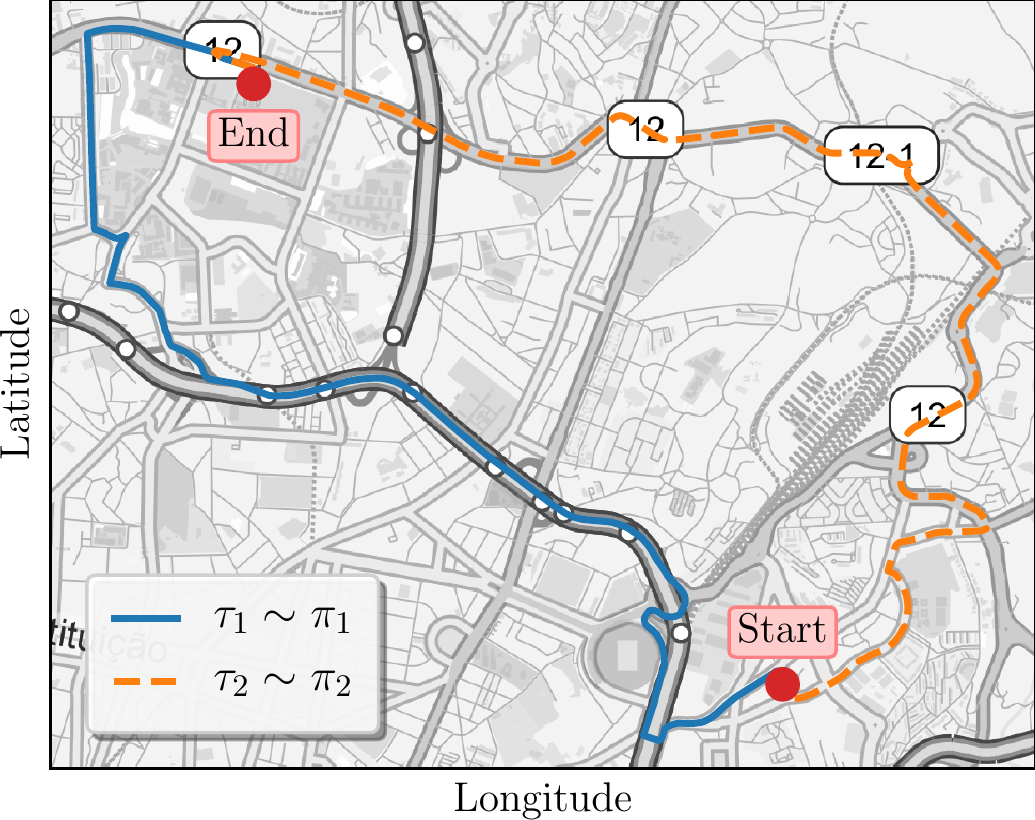}
    \end{subfigure}%
    \hfill
    \begin{subfigure}{.3\columnwidth}
        \centering
        \includegraphics[width=\linewidth]{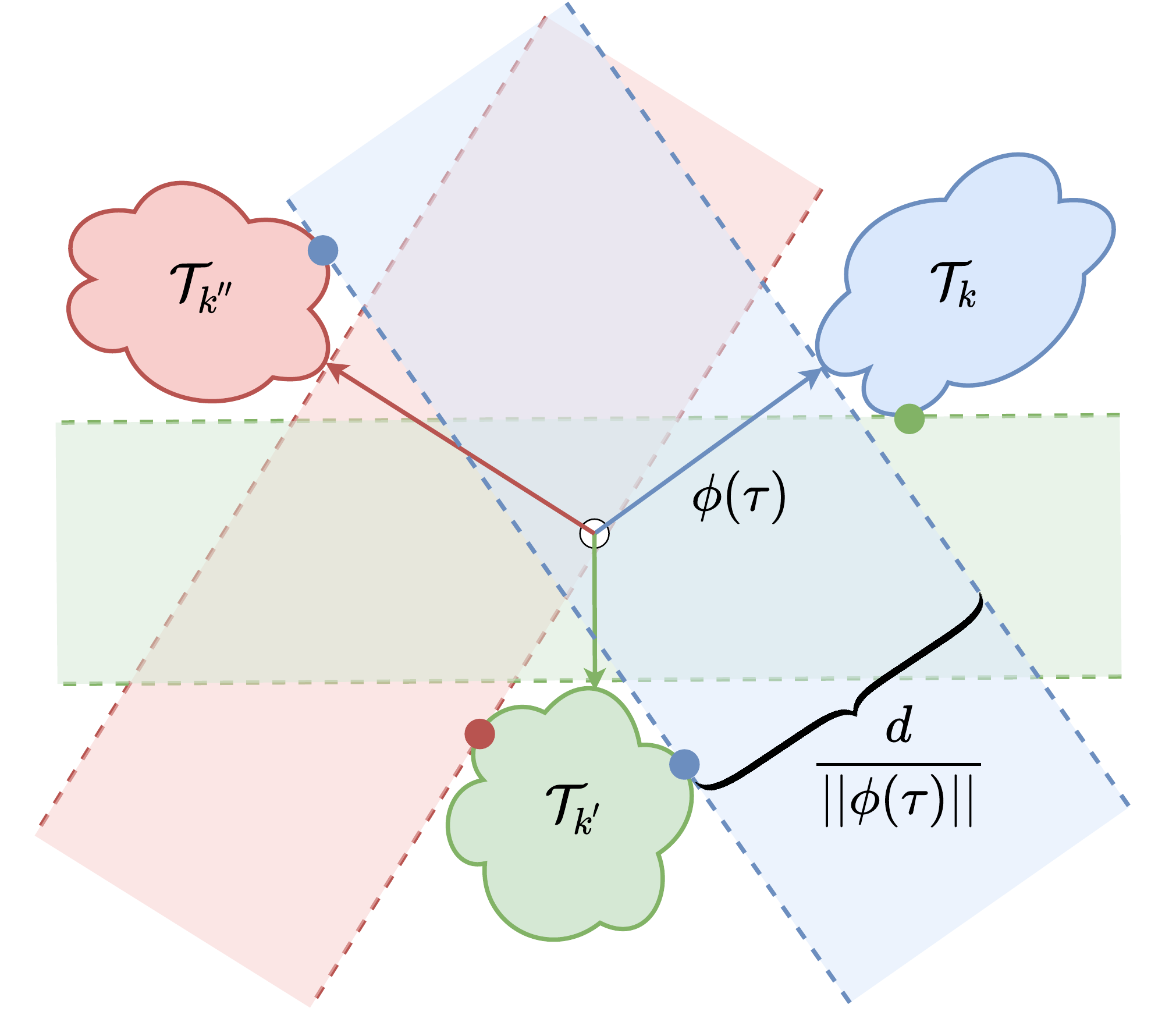}
    \end{subfigure}%
    \hfill
    \begin{subfigure}{0.3\columnwidth}
        \centering
        \includegraphics[width=\linewidth]{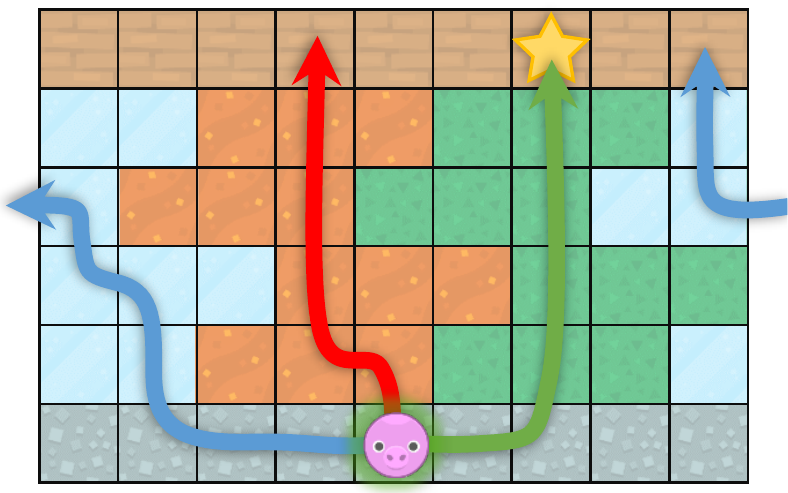}
    \end{subfigure}%
    \vspace*{-5pt}
    \caption{%
        (Left): Multiple behaviour intents in the \emph{Porto} driver behaviour dataset.
        Taxi drivers going from the same pick-up location to the same destination may choose very different routes.
        \hl{(Middle): An illustration of \mbox{\cref{assume:clusters}}} for
				$K=3, |\phi| = 2$, when $\zeta = 0$.
        Demonstrations not in cluster $\Tau_k$ are shown to be at least some
				margin away from $\Tau_k$ when projected onto $\tau \in \Tau_k$.
				When $\zeta > 0$, some trajectories in $\Tau_{k}$ are allowed to be
				closer to cluster $k'$ than cluster $k$ when projected on $\tau$.
        (Right): Our \textit{ElementWorld} MI-IRL benchmark is a gridworld problem that wraps around the x-axis.
        The agent starts in the bottom-most row of the grid world and must navigate to the top.
        While en-route, the agent prefers to visit states matching it's desired element. 
    }
    \label{fig:porto-multimodal}
    \label{fig:assumption-a}
    \label{fig:element-world}
    \vspace{-10pt}
\end{figure}

We consider behaviours generated by one or more experts acting in some environment.
We model the environment as a partial Markov Decision Process (MDP) $\mathcal{M} = \langle \States, \Actions, p_{0}, T, \gamma \rangle$ with state space $\States$ and action space $\Actions$, discount factor $\gamma \in [0, 1)$, starting state probabilities $p_0(s)$ and a transition function $T(s, a, s') = p(s' \mid s, a)$.
In MI-IRL, we are given a dataset of demonstration trajectories $\Data = \{\tau_{1}, \ldots, \tau_{N}\}$ collected when an ensemble of $K$ experts interact with the environment to optimize their individual reward functions $R_k(s, a, s') \triangleq \theta_k^\top \phi(s, a, s')$.
Formally, we assume each trajectory $\tau$ is sampled from $p(\tau \mid \rho_{1:K}, \pi_{1:K}) = \sum_{k=1} \rho_{k} ~ \pi_{k}(\tau)$, where $\{\rho_i\}_{i=1}^K$ are positive mixture weights, and each expert $\pi_{i}$ is represented as a distribution on $\Tau$, the set of all trajectories of length at most $L$.
We work in the unsupervised setting, where the trajectories are not labeled with the corresponding behavior mode.

Given a transition-based feature function $\phi: \States \times \Actions \times \States \to \mathbb{R}$, we define its extension to trajectories by $\phi(\tau) = \sum_{t=1}^{|\tau|-1} \gamma^{t-1} \phi(s_{t}, a_{t}, s_{t+1})$, where $\tau = (s_{1}, a_{1}, s_{2}, \ldots, s_{|\tau|})$.

We use $\Data$ to learn an ensemble of models $p(\tau \mid \rho_{1:K}, \theta_{1:K}) = \sum_{k=1}^{K} \rho_{k} ~ p(\tau \mid \theta_{k})$, where $\{\rho_{1:K}, \theta_{1:K}\}$ are the parameters with each $\rho_{i} \ge 0$ and $\sum_{i} \rho_{i} = 1$, and $p(\tau \mid \theta_{k})$ is a distribution on $\Tau$ with parameters $\theta_{k}$.
The parameters $\{\rho_{1:K}, \theta_{1:K}\}$ are 
chosen to maximize the likelihood $p(\Data \mid \rho_{1:K}, \theta_{1:K})$.
The EM algorithm, first applied to MI-IRL by \cite{Babes-Vroman2011}, starts from some initial parameter estimates $\{\rho_{1:K}^{(0)}, \theta_{1:K}^{(0)}\}$, and then iteratively improves the parameter estimates until convergence. 

At iteration $t$, EM updates the parameters as follows
\vspace*{-5pt}
\begin{align*}
	u_{ik}^{(t+1)}
	&= \frac{
	    \rho_{k}^{(t)} p(\tau_{i} \mid \theta_{k}^{(t)})
	}{
	    \sum_{k'} \rho_{k'}^{(t)} p(\tau_{i} \mid \theta_{k'}^{(t)})
    },
    &
	\rho_{k}^{(t+1)} 
	&= \frac{1}{N} \sum_{i} u_{ik}^{(t+1)},
	&
	\theta_{k}^{(t+1)} = \argmax_{\theta} \sum_{i} u_{ik}^{(t+1)} p(\tau_{i} \mid \theta).
\end{align*}
%
%
We say that a solution $\Theta_{t} = \{\rho_{1:K}^{(t)}, \theta_{1:K}^{(t)}\}$ is $\epsilon$-optimal if 
$\frac{1}{|\Data|}\sum_{i} \sum_{k} |u_{ik}^{(t+1)} - u_{ik}^{(t)}| < \epsilon$.
Intuitively, a small $\epsilon$ indicates that the EM algorithm has more or less
converged to a local optimum.

\section{The LiMIIRL Algorithm}
\label{sec:LiMIIRL}

Our LiMIIRL algorithm provides a simple way to generate good initial parameter estimates $\Theta^{(0)} = (\rho^{(0)}_1, \dots, \rho^{(0)}_{K}, \theta^{(0)}_1, \dots, \theta^{(0)}_{K})$ by clustering demonstrations in feature space.
The initial solution is generated in three steps.

First, LiMIIRL runs a user-specified clustering algorithm on the feature vectors of the training trajectories to obtain a partition $\{C_{1}, \ldots, C_{K}\}$ of $\Data$.
Second, LiMIIRL estimates $\rho_{1:K}$ using $\rho_{k}^{(0)} = \sum_{i=1}^N u_{ik}^{(0)} / N$, where the membership variable $u_{ik}^{(0)} = \mathbb{I}(i \in C_{k})$ denotes whether trajectory $i$ belongs to cluster $k$.
Finally, LiMIIRL estimates $\theta_{1:K}^{(0)}$.
We consider two ways to do this, \textit{mean} initialization, or \textit{Maximum Likelihood Estimate (MLE)} initialization, given respectively by,
\begin{align}
	\theta_{k,\text{Mean}}^{(0)} &= \Phi_{k} \triangleq \frac{1}{|C_{k}|} \sum_{\tau \in C_{k}} \phi(\tau),
	&
	\theta_{k,\text{MLE}}^{(0)} &= \argmax_{\theta} \sum_{\tau \in C_{k}} \ln p(\tau \mid \theta).
\end{align}
In our algorithm, $p(\tau \mid \theta)$ can be any single-intent IRL model, and we experimented with the MaxEnt IRL model \cite{Ziebart2008}, 
Maximum Likelihood IRL \cite{Babes-Vroman2011}, and $\Sigma$-Gradient IRL \cite{Ramponi2020}.

Below, we provide a theoretical justification of why clustering in feature space is an effective choice for initializing an ensemble of behaviours based on the MaxEnt IRL model, which has exact and computationally efficient solution methods available \cite{Snoswell2020}. 
Specifically, this entails choosing 
$p(\tau \mid \theta) = q(\tau) e^{\phi(\tau)^{\top} \theta} / Z(\theta)$, where 
$q(\tau) = p_0(s_{1}) \prod_{t=1}^{|\tau|-1} p(s_{t+1} \mid s_{t}, a_{t})$, and 
$Z(\theta) = \sum_{\tau' \in \Tau} q(\tau') e^{\theta^\top \phi(\tau')}$ is the
partition function.

Interestingly, our empirical results suggest that this warm-start technique also provides a good initial reward ensemble under alternate behaviour models, such as the Boltzmann model in ML-IRL and BIRL \cite{Babes-Vroman2011,Ramachandran2007} and the gradient-based model in $\Sigma$-GIRL \cite{Ramponi2020}.

\subsection{Theoretical Analysis}
\label{subsec:theoretical-analysis}

To be able to discern different behavior intents in the demonstration data, the
intents must be distinguishable.
When representing the demonstrations as feature vectors, this amounts to
requiring the feature vectors for \emph{typical/characteristic} trajectories
from each expert to form clusters that are mostly distinct from each other, and
the \emph{atypical/ambiguous} trajectories can be different but not too
different from the typical trajectories.
Specifically, to formulate this intuition, we say that a trajectory
$\tau$ \emph{$\gamma$-separates} two sets of trajectories $\Tau_{a}$ and
$\Tau_{b}$ (in the feature space) if 
$\phi(\tau)^{\top}(\phi(\tau_{a}) - \phi(\tau_{b})) \ge \gamma$ for all
$\tau_{a} \in \Tau_{a}$ and $\tau \in \Tau_{b}$.
Geometrically, when $\gamma > 0$, the trajectories in $\Tau_{a}$ and $\Tau_{b}$
are separated by a margin of at least $\gamma/\norm{\phi(\tau)}$ when projected
on $\tau$ in the feature space.
That is, we can find a SVM with a margin of at least
$\gamma/\norm{\phi(\tau)}$ to separate trajectories in $\Tau_{a}$ and
$\Tau_{b}$.
When $\gamma \le 0$, the projections of $\Tau_{a}$ and $\Tau_{b}$ on $\tau$ may 
overlap over an interval of length $|\gamma|/\norm{\phi(\tau)}$.
\begin{assume}[Approximate cluster separation] \label{assume:clusters}
	Let $\Tau_{k}$ be the set of trajectories that can be generated by expert
	$\pi_{k}$, and $\Tau_{\sim k} = \Tau - \Tau_{k}$.
	We assume that there exists $\zeta \in [0, 1/2)$ such that each $\Tau_{k}$ can be partitioned into two sets: 
	a set of typical trajectories $\Tau_{k}^{+}$ and a set of atypical trajectories $\Tau_{k}^{-}$ such that 
	$Q_{k}^{+} = \sum_{\tau \in \Tau_{k}^{+}} q(\tau) 
	\ge 
	(1 - \zeta) Q_{k}$, or equivalently, 
	$Q_{k}^{-} 
	= \sum_{\tau \in \Tau_{k}^{-}} q(\tau) 
	\le 
	\zeta Q_{k}$,
	where $Q_{k} = \sum_{\tau \in \Tau_{k}} q(\tau)$.
	In addition, the typical trajectories and atypical trajectories satisfy the
	following properties
	\begin{enumerate}[(a)]
		\item There exist $\intermargin, \intramargin > 0$, such that for any $k \in [K]$, 
			\begin{enumerate}[(i)]
				\item each $\tau \in \Tau_{k}^{+}$ $d$-separates $\Tau_{k}^{+}$ and
					$\Tau_{\sim k}$, and $\intramargin$-separates $\Tau_{k}^{+}$ and $\Tau_{k}^{-}$. 
				\item each $\tau \in \Tau_{k}^{-}$ $(-\intermargin)$-separates $\Tau_{k}^{+}$ and
					$\Tau_{\sim k}$, and $(-\intramargin)$-separates $\Tau_{k}^{+}$ and $\Tau_{k}^{-}$.
			\end{enumerate}
		\item $\forall k \in [K], ~\exists~ 
			r_{k} < \min_{\tau \in \Tau_{k}^{+}} \intermargin/2\norm{\phi(\tau)}$
			s.t.
			$\|\phi(\tau) - \phi(\tau')\| \le 2r_{k}$ for all $\tau, \tau' \in \Tau_{k}^{+}$.
			That is, the trajectories in $\Tau_{k}^{+}$ are contained in a ball of radius $r_{k}$ in the feature space.
	\end{enumerate}
\end{assume}
Intuitively, we assume that a typical trajectory in a cluster can be used to
distinguish typical trajectories from atypical trajectories and trajectories
in other clusters with margins $\intramargin$ and $\intermargin$ respectively.
On the other hand, an atypical trajectory can fail to do so, but the overlaps
are upper bounded by $\intramargin$ and $\intermargin$ respectively, so that
they cannot be too different from the typical trajectories --- 
our analysis can be easily extended to alternative choices of the upper bounds,
but we have chosen $\intramargin$ and $\intermargin$ for simplicity.
\Cref{fig:assumption-a} (middle) provides an illustration of \Cref{assume:clusters} for the case when $\zeta = 0$.
In practice, a useful trick is to shift the features so that the mean feature
vector is 0.
This does not change the parameters of the optimal MaxEnt IRL model, but allows
us to exploit the clusters in the data via \cref{assume:clusters}.


With mostly-distinct clusters, we can reasonably expect a clustering algorithm
to be able to discover the underlying clusters in the dataset with high
accuracy.
\begin{assume}[Approximately correct clustering] \label{assume:kmeans}
	There exists $\delta \in [0,1/2)$, such that at least $1-\delta$ of the trajectories
	in $C_{k}$ are in $\Tau_{k}^{+}$.
	In addition, the other trajectories 
	$(-\intermargin)$-separate $\Tau_{k}^{+}$ and $\Tau_{k'}$, and 
	$(-\intramargin)$-separate $\Tau_{k}^{+}$ and $\Tau_{k}^{-}$.
\end{assume}
Intuitively, most the trajectories in $\Data$ are typical, and the clustering
algorithm is able to cluster most of them correctly.
However, the algorithm may fail on some trajectories, and these
trajectories are assumed to be not worse than the atypical trajectories in terms
of their separation ability.

To quantify the quality of the mean initialization, we first observe that
$\Phi_{k}$ is like a typical trajectory in $\Tau_{k}$ (all proofs are in the
supplementary file).
\begin{lemma} \label{lem:phi} 
	For any $k \in [K]$,
	$\Phi_{k}$ $\tilde{\intermargin}$-separates $\Tau_{k}^{+}$ and $\Tau_{\sim k}$
	for 
	$\tilde{\intermargin} 
	= 
	(1 - 2 \delta) \intermargin$.
	In addition, $\Phi_{k}$
	$\tilde{\intramargin}$-separates 
	$\Tau_{k}^{+}$ and $\Tau_{k}^{-}$ 
	for $\tilde{\intramargin} = (1 - 2\delta) \intramargin$.
\end{lemma}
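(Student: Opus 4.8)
The plan is to express $\Phi_k$ as an average of the feature vectors of the trajectories in $C_k$, split that average according to whether each trajectory is correctly clustered (i.e.\ lies in $\Tau_k^+$) or not, and bound the two groups' contributions using \cref{assume:clusters,assume:kmeans}. The bookkeeping fact that drives everything is that a correctly clustered trajectory contributes an inner product that is at least the nominal margin, while a misclassified one contributes at worst the negative of that margin, so a $\delta$ fraction of misclassifications erodes the margin by at most a factor $2\delta$.

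First I would fix $k$ and arbitrary $\tau_a \in \Tau_k^+$, $\tau_b \in \Tau_{\sim k}$, and write
\[
  \Phi_k^\top\bigl(\phi(\tau_a) - \phi(\tau_b)\bigr)
  = \frac{1}{|C_k|}\sum_{\tau \in C_k}\phi(\tau)^\top\bigl(\phi(\tau_a) - \phi(\tau_b)\bigr).
\]
Let $G = C_k \cap \Tau_k^+$ be the correctly clustered trajectories and $B = C_k \setminus G$ the remainder; \cref{assume:kmeans} gives $|G| \ge (1-\delta)|C_k|$, hence $|B| = |C_k| - |G| \le \delta|C_k|$. For $\tau \in G \subseteq \Tau_k^+$, \cref{assume:clusters}(a)(i) says $\tau$ $d$-separates $\Tau_k^+$ and $\Tau_{\sim k}$, so $\phi(\tau)^\top(\phi(\tau_a) - \phi(\tau_b)) \ge d$; for $\tau \in B$, \cref{assume:kmeans} guarantees $\tau$ $(-d)$-separates $\Tau_k^+$ and $\Tau_{\sim k}$, so the same inner product is $\ge -d$. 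Summing and using $|G| - |B| = 2|G| - |C_k| \ge (1-2\delta)|C_k|$,
\[
  \Phi_k^\top\bigl(\phi(\tau_a) - \phi(\tau_b)\bigr)
  \ge \frac{d\,|G| - d\,|B|}{|C_k|}
  \ge d(1-2\delta) = \tilde d .
\]
Since $\tau_a, \tau_b$ were arbitrary this is exactly the first claim.

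I would then observe that the second claim follows from the identical argument with $\Tau_{\sim k}$ replaced by $\Tau_k^-$ and $d$ by $\gamma$: correctly clustered trajectories $\gamma$-separate $\Tau_k^+$ and $\Tau_k^-$ by \cref{assume:clusters}(a)(i), misclassified ones $(-\gamma)$-separate them by \cref{assume:kmeans}, and the same averaging/counting step yields margin $(1-2\delta)\gamma = \tilde\gamma$. There is no substantial obstacle here; the only things that need care are lining up the sign conventions in the definition of $\gamma$-separation so that the $+d$ (resp.\ $+\gamma$) lower bounds on $G$ and the $-d$ (resp.\ $-\gamma$) lower bounds on $B$ point the right way, and noting that $\delta < 1/2$ is precisely what makes $\tilde d, \tilde\gamma$ strictly positive. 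It is worth remarking that this also explains why \cref{assume:kmeans} only needs to control the misclassified trajectories' separation ability from below by the negative margins, rather than demanding that they be clustered correctly at all.
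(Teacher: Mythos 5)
Your proposal is correct and follows essentially the same route as the paper's proof: decompose $\Phi_k$ into the contributions of the correctly clustered trajectories in $C_k \cap \Tau_k^+$ and the rest, lower-bound the former by $+\intermargin$ (resp.\ $+\intramargin$) via \cref{assume:clusters}(a)(i) and the latter by $-\intermargin$ (resp.\ $-\intramargin$) via \cref{assume:kmeans}, and average to get the $(1-2\delta)$ factor. Your version merely makes the counting $|G| \ge (1-\delta)|C_k|$, $|B| \le \delta|C_k|$ explicit where the paper writes the bound directly as $(1-\delta)\intermargin + \delta(-\intermargin)$.
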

Our second observation is that $p(\tau \mid \Phi_{k})$ is large for typical
trajectories in $\Tau_{k}$, and small for trajectories not in $\Tau_{k}$.
\begin{lemma} \label{lem:lu}
	For any $\tau \in \Tau_{k}^{+}$, we have 
	$p(\tau \mid \Phi_{k}) 
	\ge 
	\frac{(1-\zeta) q(\tau) e^{\tilde{\intermargin}-2r_{k} \norm{\Phi_{k}}}}
	{(1-\zeta) Q_{k} e^{\tilde{\intermargin}} + Q_{\sim k}}
	\frac
	{(1-\zeta) e^{\tilde{\intramargin}}}
	{\zeta + (1-\zeta) e^{\tilde{\intramargin}}}$,
	where $Q_{\sim k} = \sum_{\tau \notin \Tau_{k}} Q_{k'}$.
	In addition, for any $\tau \notin \Tau_{k}$, we have 
	$p(\tau \mid \Phi_{k}) 
	\le 
	\frac{q(\tau)}{(1-\zeta) Q_{k} e^{\tilde{\intermargin}} + q(\tau)}$.
\end{lemma}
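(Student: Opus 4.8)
\emph{Proof strategy.} The plan is to prove both bounds by expanding the MaxEnt form $p(\tau \mid \Phi_k) = q(\tau) e^{\phi(\tau)^\top \Phi_k} / Z(\Phi_k)$ and splitting the partition function along the partition $\Tau = \Tau_k^+ \cup \Tau_k^- \cup \Tau_{\sim k}$, writing $Z(\Phi_k) = A + B + C$ with $A = \sum_{\tau' \in \Tau_k^+} q(\tau') e^{\phi(\tau')^\top \Phi_k}$, $B = \sum_{\tau' \in \Tau_k^-} q(\tau') e^{\phi(\tau')^\top \Phi_k}$, $C = \sum_{\tau' \in \Tau_{\sim k}} q(\tau') e^{\phi(\tau')^\top \Phi_k}$, and $Q_{\sim k} = \sum_{\tau' \in \Tau_{\sim k}} q(\tau')$. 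Everything rests on two facts: (i) \Cref{assume:clusters}(b) with Cauchy--Schwarz gives $\phi(\tau')^\top \Phi_k \le \phi(\tau)^\top \Phi_k + 2 r_k \norm{\Phi_k}$ whenever $\tau, \tau' \in \Tau_k^+$, hence $A \le Q_k^+ e^{\phi(\tau)^\top \Phi_k + 2 r_k \norm{\Phi_k}}$; and (ii) \Cref{lem:phi}, which gives $\phi(\tau_+)^\top \Phi_k \ge \phi(\tau_{\sim k})^\top \Phi_k + \tilde{\intermargin}$ and $\phi(\tau_+)^\top \Phi_k \ge \phi(\tau_-)^\top \Phi_k + \tilde{\intramargin}$ for all $\tau_+ \in \Tau_k^+$, $\tau_- \in \Tau_k^-$, $\tau_{\sim k} \in \Tau_{\sim k}$.

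\emph{Lower bound.} For $\tau \in \Tau_k^+$, I would read fact (i) as a lower bound on the numerator, $e^{\phi(\tau)^\top \Phi_k} \ge A e^{-2 r_k \norm{\Phi_k}} / Q_k^+$. For the denominator, fact (ii) gives $e^{\phi(\tau_-)^\top \Phi_k} \le e^{-\tilde{\intramargin}} e^{\phi(\tau_+)^\top \Phi_k}$ for every $\tau_\pm$; weighting by $q(\tau_+)$ and summing over $\tau_+$ yields $Q_k^+ e^{\phi(\tau_-)^\top \Phi_k} \le e^{-\tilde{\intramargin}} A$, and a further sum over $\tau_-$ gives $B \le (Q_k^- / Q_k^+) e^{-\tilde{\intramargin}} A$; identically $C \le (Q_{\sim k} / Q_k^+) e^{-\tilde{\intermargin}} A$. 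Substituting these into $p(\tau \mid \Phi_k) = q(\tau) e^{\phi(\tau)^\top \Phi_k}/(A+B+C)$, the common factor $A/Q_k^+$ cancels, and multiplying through by $e^{\tilde{\intermargin}}$ gives
\[
	p(\tau \mid \Phi_k) \ge \frac{q(\tau)\, e^{\tilde{\intermargin} - 2 r_k \norm{\Phi_k}}}{Q_k^+ e^{\tilde{\intermargin}} + Q_k^- e^{\tilde{\intermargin} - \tilde{\intramargin}} + Q_{\sim k}} .
\]
Finally, writing $Q_k^+ = a Q_k$, $Q_k^- = b Q_k$ with $a+b=1$, $a \ge 1-\zeta$, $b \le \zeta$, and cross-multiplying against the target, the remaining claim reduces to
\[
	(1-\zeta)^2 Q_k b\, e^{\tilde{\intermargin}+\tilde{\intramargin}} + (1-\zeta) Q_k e^{\tilde{\intermargin}}\big(\zeta-(1-\zeta)b\big) + \zeta Q_{\sim k}\big(1+(1-\zeta)e^{\tilde{\intramargin}}\big) \ge 0,
\]
whose three summands are each nonnegative because $\zeta \ge 0$, $0 \le b \le \zeta$, and $1-\zeta \le 1$ (so $(1-\zeta)b \le b \le \zeta$).

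\emph{Upper bound.} For $\tau \notin \Tau_k$ --- so $\tau \in \Tau_{\sim k}$ and $\tau \notin \Tau_k^+$ --- I would keep only the $\Tau_k^+$ block of $Z(\Phi_k)$ together with the lone $\tau$-term: $Z(\Phi_k) \ge A + q(\tau) e^{\phi(\tau)^\top \Phi_k}$. Applying the $\tilde{\intermargin}$-separation of \Cref{lem:phi} to this fixed $\tau$ against each $\tau' \in \Tau_k^+$ gives $e^{\phi(\tau')^\top \Phi_k} \ge e^{\tilde{\intermargin}} e^{\phi(\tau)^\top \Phi_k}$, so $A \ge Q_k^+ e^{\tilde{\intermargin}} e^{\phi(\tau)^\top \Phi_k} \ge (1-\zeta) Q_k e^{\tilde{\intermargin}} e^{\phi(\tau)^\top \Phi_k}$. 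Hence $Z(\Phi_k) \ge e^{\phi(\tau)^\top \Phi_k}\big((1-\zeta) Q_k e^{\tilde{\intermargin}} + q(\tau)\big)$, the exponential cancels in the ratio, and the stated bound follows.

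\emph{Anticipated obstacle.} There is nothing conceptually hard here; the work is all bookkeeping. The one choice that matters is to bound $B$ and $C$ \emph{relative to $A$} rather than relative to the numerator $e^{\phi(\tau)^\top \Phi_k}$ --- this is exactly what makes the final ratio collapse to the advertised product rather than a looser variant --- and to keep the orientation of the ``$\cdot$-separates'' relation straight when invoking \Cref{lem:phi}. The closing scalar inequality is elementary but slightly fussy, and is the most likely spot for an arithmetic slip.
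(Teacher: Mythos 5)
Your proof is correct and rests on the same ingredients as the paper's: the split of the partition function over $\Tau_{k}^{+}$, $\Tau_{k}^{-}$, $\Tau_{\sim k}$, the separation properties of $\Phi_{k}$ from \cref{lem:phi}, and the Cauchy--Schwarz/radius bound from \cref{assume:clusters}(b). The only structural difference is organizational: the paper factors $p(\tau \mid \Phi_{k}) = \frac{W_{k}}{W}\cdot\frac{w(\tau)}{W_{k}}$ (with $w(\tau) = q(\tau)e^{\phi(\tau)^{\top}\Phi_{k}}$) and bounds each factor separately, so the product form of the stated bound emerges directly, whereas you bound the denominator as a whole, obtain the slightly tighter intermediate bound $q(\tau)e^{\tilde{\intermargin}-2r_{k}\norm{\Phi_{k}}}/\bigl(Q_{k}^{+}e^{\tilde{\intermargin}}+Q_{k}^{-}e^{\tilde{\intermargin}-\tilde{\intramargin}}+Q_{\sim k}\bigr)$, and then must do the closing cross-multiplication; that final scalar inequality, which you flagged as the risk point, does check out --- your three summands are exactly the coefficients of $Q_{k}e^{\tilde{\intermargin}+\tilde{\intramargin}}$, $Q_{k}e^{\tilde{\intermargin}}$ and $Q_{\sim k}$ in the difference of the two sides, and each is nonnegative given $Q_{k}^{-}\le \zeta Q_{k}$. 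The upper-bound argument is identical to the paper's.
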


With the above two observations, we can then quantify the quality of the mean
initialization.

\begin{thm} \label{thm:near-optimality}
	The mean initialisation LiMIIRL initial estimates $\{\rho_{1:K}^{(0)},
	\theta_{1:K}^{(0)}\}$ are $\epsilon$-optimal for 
	$\epsilon = 2\delta + (1-\delta) \frac{2 (K-1)}{\beta + K-1}$, where 
	$\beta 
	= 
	\min_{k \neq k'} 
	\frac
	{(1-\zeta)^{2} e^{\tilde{\intramargin}}}
	{\zeta + (1-\zeta) e^{\tilde{\intramargin}}}
	\frac{|C_{k}| ((1-\zeta) Q_{k'} e^{\tilde{\intermargin}})}
	{|C_{k'}| ((1-\zeta) Q_{k} e^{\tilde{\intermargin}} +Q_{\sim k})}
	e^{\tilde{\intermargin} - 2r_{k} \norm{\Phi_{k}}}$.
	%
\end{thm}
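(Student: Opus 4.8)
The plan is to bound the one-step membership change $\frac{1}{N}\sum_{i}\sum_{k}\abs{u_{ik}^{(1)}-u_{ik}^{(0)}}$ directly, where the mean initialisation gives $u_{ik}^{(0)}=\mathbb{I}(i\in C_{k})$, $\rho_{k}^{(0)}=\abs{C_{k}}/N$, $\theta_{k}^{(0)}=\Phi_{k}$, and one E-step produces $u_{ik}^{(1)}=\rho_{k}^{(0)}p(\tau_{i}\mid\Phi_{k})\big/\sum_{k'}\rho_{k'}^{(0)}p(\tau_{i}\mid\Phi_{k'})$. First I would partition $\Data$ into the correctly clustered trajectories $G=\{\,i: i\in C_{k}$ and $\tau_{i}\in\Tau_{k}^{+}$ for the same $k\,\}$ and the remainder $B=\Data\setminus G$. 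By \cref{assume:kmeans} each $C_{k}$ has at most a $\delta$-fraction outside $\Tau_{k}^{+}$, so $\abs{B}\le\delta N$. For $i\in B$ I would only use the trivial estimate $\sum_{k}\abs{u_{ik}^{(1)}-u_{ik}^{(0)}}\le\sum_{k}u_{ik}^{(1)}+\sum_{k}u_{ik}^{(0)}=2$, valid since both $(u_{ik}^{(1)})_{k}$ and $(u_{ik}^{(0)})_{k}$ are probability vectors; these trajectories contribute the $2\delta$ term.

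For $i\in G$, say $i\in C_{k}$ with $\tau_{i}\in\Tau_{k}^{+}$, the hard assignment gives $u_{ik}^{(0)}=1$ and $u_{ik'}^{(0)}=0$ for $k'\ne k$, hence $\sum_{k'}\abs{u_{ik'}^{(1)}-u_{ik'}^{(0)}}=(1-u_{ik}^{(1)})+\sum_{k'\ne k}u_{ik'}^{(1)}=2(1-u_{ik}^{(1)})$, so it remains to lower bound $u_{ik}^{(1)}=\bigl(1+\sum_{k'\ne k}\rho_{k'}^{(0)}p(\tau_{i}\mid\Phi_{k'})/(\rho_{k}^{(0)}p(\tau_{i}\mid\Phi_{k}))\bigr)^{-1}$. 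I would bound each summand with three ingredients: (i) $\rho_{k'}^{(0)}/\rho_{k}^{(0)}=\abs{C_{k'}}/\abs{C_{k}}$; (ii) the lower bound on $p(\tau_{i}\mid\Phi_{k})$ from \cref{lem:lu}, which applies since $\tau_{i}\in\Tau_{k}^{+}$; and (iii) the upper bound on $p(\tau_{i}\mid\Phi_{k'})$ from \cref{lem:lu}, which applies since $\tau_{i}\in\Tau_{k}^{+}\subseteq\Tau_{k}$ forces $\tau_{i}\notin\Tau_{k'}$, further relaxed via $q(\tau_{i})>0$ to the strict bound $p(\tau_{i}\mid\Phi_{k'})<q(\tau_{i})/((1-\zeta)Q_{k'}e^{\tilde{\intermargin}})$. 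The common factors $q(\tau_{i})$ cancel, and collecting the surviving constants (one factor of $(1-\zeta)e^{\tilde{\intramargin}}/(\zeta+(1-\zeta)e^{\tilde{\intramargin}})$, the appropriate powers of $(1-\zeta)$, and the $e^{\tilde{\intermargin}-2r_{k}\norm{\Phi_{k}}}$ and cluster-size ratios) yields exactly the reciprocal of the $(k,k')$ term inside the $\min_{k\ne k'}$ defining $\beta$; hence each summand is at most $1/\beta$. Therefore $u_{ik}^{(1)}\ge\bigl(1+(K-1)/\beta\bigr)^{-1}=\beta/(\beta+K-1)$, so the contribution of each $i\in G$ is $2(1-u_{ik}^{(1)})\le 2(K-1)/(\beta+K-1)$.

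Finally I would combine the two cases. Writing $c:=2(K-1)/(\beta+K-1)$, and noting $c\le 2$ (since $\beta\ge0$), $\abs{B}\le\delta N$, and $\abs{G}=N-\abs{B}$, we get $\frac{1}{N}\sum_{i}\sum_{k}\abs{u_{ik}^{(1)}-u_{ik}^{(0)}}\le\frac{1}{N}\bigl(2\abs{B}+c\abs{G}\bigr)=c+\tfrac{\abs{B}}{N}(2-c)\le c+\delta(2-c)=2\delta+(1-\delta)\frac{2(K-1)}{\beta+K-1}=\epsilon$, which is the claim; the strict inequality in the definition of $\epsilon$-optimality follows from the strict bound used in step (iii) as soon as $G\ne\emptyset$.

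I expect the main obstacle to be the constant-chasing in the second paragraph: one must keep straight which half of \cref{lem:lu} goes in the numerator versus the denominator of each likelihood ratio, verify that after cancelling $q(\tau_{i})$ and one copy of the $\tilde{\intramargin}$ factor the remaining expression is \emph{exactly} $1/\beta_{kk'}$ rather than merely comparable to it, and then pass to the worst pair via $\sum_{k'\ne k}1/\beta_{kk'}\le(K-1)/\min_{k\ne k'}\beta_{kk'}=(K-1)/\beta$. A secondary point worth stating explicitly is the disjointness of the sets $\Tau_{k}$ (so that $\tau_{i}\in\Tau_{k}^{+}$ rules out $\tau_{i}\in\Tau_{k'}$, which step (iii) needs); this is consistent with the convention $\Tau_{\sim k}=\Tau-\Tau_{k}$ under which the $\Tau_{k}$ form a partition of $\Tau$.
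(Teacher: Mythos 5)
Your proposal is correct and follows essentially the same route as the paper's proof: decompose $\Data$ into the correctly clustered typical trajectories and the at-most-$\delta$ fraction of others, bound the latter trivially by $2$, and for the former lower-bound $u_{ik}^{(1)}$ via the ratio $\rho_k^{(0)}p(\tau_i\mid\Phi_k)/\rho_{k'}^{(0)}p(\tau_i\mid\Phi_{k'})\ge\beta$ obtained from the two halves of \cref{lem:lu}, including the same relaxation that drops $q(\tau_i)$ from the numerator to reach the stated $\beta$. The only cosmetic difference is that you derive $u_{ik}^{(1)}\ge\beta/(\beta+K-1)$ by direct computation where the paper invokes its auxiliary \cref{lem:basic}, and you make the $G$/$B$ partition and the disjointness of the $\Tau_k$ explicit where the paper leaves them implicit.
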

The bound in \Cref{thm:near-optimality} implies that the quality of the
warm-start solution improves when the separability of the behaviors and the
quality of the clustering algorithm are higher.
Specifically, noting that 
$\tilde{\intermargin} 
= (1 - 2\delta) \intermargin$ and 
$\tilde{\intramargin} 
= (1 - 2\delta) \intramargin$,
we can see that $\epsilon$ is small 
when $\zeta$ is small (smaller proportion of atypical trajectories),
$\intermargin$ is large (more distinct behaviors), 
$r_{k}$'s are smaller (more concentrated clusters),
$\intramargin$ is large (larger difference between typical and atypical
trajectories),
and $\delta$ is small (more accurate clustering algorithm).
In addition, $\epsilon$ converges to $\delta$ at a rate exponential in $\intermargin$.

For the MLE initialization, obtaining a closed-form error bound is much more involved.
However, we highlight that the above analysis also provides an insight that when the scaled margin is large, the MLE initialization would work well too.
To see this, note that cluster $k$ is roughly contained within a hyper-cone having the origin as the vertex and an opening angle $\alpha = \sin^{-1}(r_{k}/\norm{\Phi_{k}})$.
Any $\theta_{k}$ becomes less likely to be optimal as it forms an angle of more
than $2 \alpha$ with $\Phi_{k}$, as follows: when $\theta_{k}$'s angle with
$\Phi_{k}$ changes from 0 to more than $2 \alpha$, for any $\tau \in \Tau_{k}$,
$e^{\phi(\tau)^{\top} \theta_{k}}$ decreases, because the angle between $\phi(\tau)$ and $\theta_{k}$ changes from at most $\alpha$ to at least $\alpha$.
On the other hand, there is an empty space with margin $d_{k} = d/\norm{\Phi_{k}}$ below the cluster, thus trajectories not in $\Tau_{k}$ initially form large angles with $\theta_{k}$, and have a small contribution to the partition function, then gradually become more important as $\theta_{k}$ moves away from $\Phi_{k}$, eventually making the probability of trajectories in $\Tau_{k}$ small and the likelihood of $\theta_{k}$ small.


\section{MI-IRL Performance Metrics}
\label{sec:mi-irl-performance-metrics}

We introduce our novel metric to evaluate an MI-IRL algorithm's performance in terms of the quality of the learned reward ensemble.
We also evaluate an MI-IRL algorithm's clustering performance and describe the well-known Adjusted max-Normalized Information Distance (ANID) for completeness.

\paragraph{Reward Ensemble Performance.}
\label{par:min-cost-flow-metric}

For single-intent IRL problems the Expected Value Difference (EVD) is an appropriate measure of regret of a learned reward $R^\text{L}$ \wrt{} a true reward function $R^\text{GT}$, $\text{EVD}(R^{\text{GT}}, R^{\text{L}}) \triangleq \E_{s \sim p_{0}}[\bm{v}_{\pi^*_{R^{\text{GT}}}}(s) - \bm{v}_{\pi^*_{R^{\text{L}}}}(s)]$, where $\bm{v}_{\pi}$ is the value function for policy $\pi$, and $\pi^*_{R}$ is the optimal policy \wrt{} reward function $R$ \cite{Choi2011,Levine2011}.
Previous authors applied this metric in the multiple-intent setting by averaging the EVD of the true and learned rewards for each trajectory \cite{Babes-Vroman2011,Choi2012} -- this process does not directly compare the reward ensembles but instead depends on a sample of demonstrations.

We propose a Generalized EVD (GEVD) metric that directly evaluates a learned reward ensemble $\{\rho^{\text{L}}_{1:K}, R^\text{L}_{1:K}\}$ against the ground truth reward ensemble $\{\rho^{\text{GT}}_{1:K'}, R^\text{GT}_{1:K'}\}$.
To motivate, note that a reward ensemble remains the same even if the rewards are reordered, thus for the special case where two reward ensembles are the same, we can perform a bipartite matching to pair up the rewards.

We generalize this idea to handle the harder cases in which $K \neq K'$, and the weights may not be the same.
Essentially, this requires an additional reward split step before pairing up the rewards.
Instead of just finding the optimal pairing as in the simplest case above, we now need to find the best way to split-and-pair rewards.
This can be naturally formulated as follows, $\text{GEVD}(\mathbb{R}^\text{GT}, \mathbb{R}^\text{L}) \triangleq
    \argmin_{
			\{w_{ij}\} : \sum_j w_{ij} = \rho^{\text{GT}}_i, 
			\sum_i w_{ij} = \rho^{\text{L}}_j
    }
    \sum_{i,j}
    w_{ij}\,e_{ij}$
%
%
 where $e_{ij} = \text{EVD}(R^{\text{GT}}_{i}, R^{\text{L}}_{j})$.
It is easy to show that the above problem is equivalent to a min-cost flow problem (see our note in the Supplementary file), thus the GEVD can be efficiently computed using standard graph manipulation libraries.

The range of GEVD is $\left[0, \frac{\max_k \Delta_k}{1 - \gamma}\right]$, where $\Delta_k$ is the difference between the maximum and minimum possible rewards under the $k$-th ground truth reward reward function.
GEVD is 0 iff there is a way to split-and-pair ground truth and learned rewards such that each pair of rewards lie in an equivalence class (in the sense of having the same optimal policies), and depends only on the reward ensembles.
We can normalize GEVD into the range of $[0, 1]$ by dividing it with $\frac{\max_k \Delta_k}{1 - \gamma}$, however a better normalization can be achieved by dividing it with a tighter upper bound $\sum_{i} \rho^{\text{GT}}_{i} e_{i}^{*}$, where $e_{i}^{*} = \E_{s \sim p_{0}}[ \bm{v}_{\pi^*_{R^{\text{GT}}}}(s) - \bm{v}_{\pi^{-}_{R^{\text{GT}}}}(s) ]$, with $\pi^{-}_{R}$ being the policy \textit{minimizing} the reward $R$.
We can calculate $\pi^{-}_{R}$ using the same algorithm for computing $\pi^{*}_{R}$ (\eg{} value iteration), so the normalization only increases the
computation time by a small factor.
We believe the GEVD measure provides an elegant means for evaluating arbitrary reward ensembles in controlled IRL experiments where we have access to the MDP model, and we believe this is the first such metric for the MI-IRL problem that does not require reference to a held-out testing trajectory dataset.
Furthermore, the GEVD actually provides a useful tool to diagnose poorly performing MI-IRL solutions by inspecting reward pairings with large EVDs, naturally guiding the practitioner toward parts of the model that may require attention.

\paragraph{Clustering Performance.}
MI-IRL requires both reward ensemble learning, and demonstration clustering -- both tasks should be evaluated in holistic evaluation of any MI-IRL method.
The Adjusted max-Normalized Information Distance (ANID) is a popular and theoretically justified performance metric for assessing a mixture model's (hard \textit{or} soft) clustering capability \cite{Vinh2010}.
Given a learned MI-IRL mixture model, we can compute its \emph{responsibility matrix} $\bm{U} = \{u_{ik}\}$ where $u_{ik}$ is the probability that demonstration $i$ belongs to cluster $k$.
Similarly, we can compute the responsibility matrix $\bm{V} = \{v_{ik'}\}$ for the ground-truth MI-IRL mixture model.
The ANID is given by $\text{ANID}(\bm{U}, \bm{V}) \triangleq 1 - \left(I(\bm{U},\bm{V}) - \E[I(\bm{U}',\bm{V}')]\right) / \left(\max \{ H(\bm{U}),H(\bm{V}) \} - \E[I(\bm{U}',\bm{V}')]\right)$, where $\bm{U}'$ and $\bm{V}'$ are random responsibility matrices of the same dimensions as $\bm{U}$ and $\bm{V}$ respectively distributed according to an appropriate random clustering model,\footnote{For hard clusters, hypergeometric models are commonly used.
In our soft clustering case each row of $\bm{U}'$ is independently sampled from the Dirichlet distribution $Dir(1/K, \ldots, 1/K)$, and similarly for $\bm{V}'$.} $I(\bm{U}, \bm{V})$ is the mutual information between $k, k'$ following a joint distribution $p(k, k') = \frac{1}{N} \sum_{i=1}^{N} u_{ik} v_{ik'}$, and $H(\bm{U})$ and $H(\bm{V})$ are respectively the entropies of $p(k)$ and $p(k')$, the two marginal distributions of $p(k, k')$.
The ANID has many attractive properties: it is symmetric \wrt{} to $\bm{U}$ and $\bm{V}$, is 0 iff $\bm{U} = \bm{V}$ modulo column re-ordering, is computationally simple to calculate, controls for random chance agreement between clusters when $K$ is large, is stochastically normalized to the range $[0, 1]$, and in the limit $N \rightarrow \infty$ converges to a proper metric \cite{Vinh2010}.

\section{ElementWorld Benchmark Experiment}
\label{sec:element-world}

Previous MI-IRL methods have used a simple two-intent puddle gridworld environment for evaluation \cite{Babes-Vroman2011,Ramponi2020}.
To investigate the scalability of our method on a more substantial problem, we generalized this environment to the case of many intents, creating a benchmark we call \textit{ElementWorld}.
\textit{ElementWorld} is a randomized gridworld environment with a cylindrical topology (\cref{fig:element-world}, left).
An agent starts uniform randomly in the bottom row of the map, and must navigate to any of the terminal goal states at the top of the map.
Between the start and goal areas, we generate `lanes' corresponding to each of $E$ elements (e.g. water, dirt, grass), which are randomly perturbed horizontally, forcing the agent to avoid obstacles, leading to non-trivial demonstrations.
The agent is randomly assigned one of $E$ possible intents, and for the $k^\text{th}$ intent, we set the ground truth reward parameters for all $j \ne k$ elements to $-10$, the goal to $0$, and the $k^\text{th}$ element and the start zone to $-1$.
For actions, the agent chooses from the four cardinal directions, however this fails (and a random choice is made) with a wind probability $w$.
We use an $E + 2$ dimensional state feature function $\phi(s) = (\text{Start}, \text{Goal}, \text{Element}_1, \dots, \text{Element}_E)$, which is a one-hot representation of the type of cell the agent is in.

By adjusting the number of elements $E$ and the wind probability $w$ we can vary the difficulty of the MI-IRL behaviour clustering sub-problem: $w \rightarrow 0$ leads to almost disjoint intent clusters in feature space, while $w \rightarrow 1$ leads to almost completely overlapping intents in feature space.
Likewise, by adjusting the height $h$ of the map, we change the length of the demonstrations, adjusting the difficulty of MI-IRL reward learning sub-problem.

\begin{table}[t]
    \centering
    \caption{%
        LiMIIRL experimental results for \textit{EW} -- \textit{ElementWorld}, and \textit{DB} -- Driver Behaviour forecasting.
		Values are mean $\pm ~95\%$ confidence interval over 100 (\textit{EW}), 8 (\textit{DB}) repeat experiments with randomized seeds.
		Lower is better for all metrics.
    }
    \resizebox{\textwidth}{!}{%
        %
        %
        \begin{tabular}{c|rrrrrr}
        \toprule
         & Algorithm  & Iterations & Duration (s,m) & NLL & ANID & GEVD
        \\
        \midrule
        \multirow{4}{*}{\rotatebox[origin=c]{90}{\textit{EW}}} & LiMIIRL-MLE &  \textbf{3.41 $\pm$ 01.40} &           33.20 $\pm$ 13.49 &  \textbf{19.65 $\pm$ 00.64} &  \textbf{0.03 $\pm$ 00.01} &           3.70 $\pm$ 01.11 
        \\
        & LiMIIRL-Mean &           5.08 $\pm$ 01.13 &  \textbf{21.86 $\pm$ 06.65} &           19.76 $\pm$ 00.45 &           0.04 $\pm$ 00.01 &           4.49 $\pm$ 00.98 
        \\
        & Random &          17.22 $\pm$ 04.80 &          136.58 $\pm$ 39.69 &           20.05 $\pm$ 00.71 &           0.13 $\pm$ 00.07 &           6.75 $\pm$ 02.09 
        \\
        & Supervised &                        N/A &                         N/A &           19.65 $\pm$ 00.66 &           0.04 $\pm$ 00.01 &  \textbf{3.65 $\pm$ 01.27} 
        \\
        \midrule
        \multirow{3}{*}{\rotatebox[origin=c]{90}{\textit{DB}}} & LiMIIRL-MLE  &           \textbf{1.17 $\pm$ 00.60} &           \textbf{68.29 $\pm$ 34.01} &           305.55 $\pm$ 01.20 & N/A & N/A
        \\
        & LiMIIRL-Mean &           4.17 $\pm$ 01.10 &         243.28 $\pm$ 64.22 &           \textbf{303.99 $\pm$ 03.17} & N/A & N/A
        \\
        & Random &           3.33 $\pm$ 01.51 &           198.40 $\pm$ 79.69 &           305.37 $\pm$ 03.17 & N/A & N/A
        \\
        \bottomrule
        \end{tabular}
    }%
    \label{tab:ew-results}
    \label{tab:porto-results}
\end{table}

\subsection{ElementWorld Results}
\label{subsec:element-world-results}

\vspace{1em}
\paragraph{Experimental configuration.}
We performed an experiment to validate our LiMIIRL algorithm.
With $E=3, w=0.1, h=6, \gamma=0.99$ we used Q-value iteration to find an optimal stochastic policy for each behaviour intent, then sampled uniformly from the policy ensemble for a dataset of $N=100$ demonstrations.
Using a MaxEnt IRL reward model we compared the performance of the EM algorithm with random initialization \cite{Babes-Vroman2011} to our LiMIIRL method with $k$-means clustering, with either mean, or MLE reward initialisation.
As a baseline, we also included a supervised reward ensemble that is provided with the labels mapping demonstrations to elements, and the EM MI-IRL algorithm with random initialisation, as used in \cite{Babes-Vroman2011}.
For this initial experiment we assumed the true number of elements was known \apriori{} (\ie{} $K=E$).
For each experiment, we recorded the number of iterations and wall time to EM convergence, the Negative Log Likelihood (NLL) and clustering performance (ANID) on a held-out testing set of 100 demonstrations, as well as the reward ensemble quality (GEVD).
Every experiment was repeated $100\times$ with different randomly generated \textit{ElementWorld} instances to allow estimating the mean and $95\%$ confidence interval for each metric.

\paragraph{No significant difference between MLE and mean initialisation.}
The results are summarized in \cref{tab:ew-results}, with corresponding box-plots included in the supplementary data file.
The results confirm our hypothesis that LiMIIRL is able to identify good starting points for the MI-IRL EM algorithm, at least for this problem instance which satisfies our theoretical assumptions.
The results also show that MLE and mean reward initialisation are roughly equivalent in this case -- both empirically seem to identify good starting points for the MI-IRL problem, with no statistically significant difference in the performance of these methods.
In general however, we have observed the MLE initialization seems more robust when there is large deviation from our theoretical assumptions, an idea we investigate further in the driver behaviour experiment below.

\paragraph{LiMIIRL avoid local-minima EM solutions.}
With hard clustering, LiMIIRL learns a better reward ensemble than the EM algorithm with random initialisation ($90\%$ relative reduction of ANID, $45\%$ relative reduction in GEVD), despite the solutions having equivalent Negative Log Likelihood.
This reveals an interesting property of the MI-EM algorithm -- the optimization landscape of this problem has many local minima which appear equal or nearly equal in NLL, but vary greatly in terms of actual performance, highlighting the importance of a good initialization point for the ensemble.

\paragraph{LiMIIRL leads to faster EM convergence.}
Our key finding however is that LiMIIRL leads to the EM algorithm reaching convergence significantly faster.
On average, LiMIIRL reduces the number of iterations and running time by around 80\% and 75\% respectively with $k$-means initialization, and 
30\% and 73\% respectively with GMM initialization.
Comparison with the corresponding Iteration and Duration box-plots reveals that the convergence metric results are heavy tailed -- especially when random initialisation is used, further supporting the idea that a good starting point is critical for this problem.

These results provide hope that LiMIIRL will be especially helpful on larger problems where each EM iteration is itself a computationally expensive procedure, provided we have reason to believe the behaviour intents are at least approximately separated in feature space.

\subsection{Sensitivity Analysis}
\label{subsec:sensitivity-analysis}

We conducted extensive additional experiments to investigate the sensitivity of our algorithm under additional problem variations, especially cases where the theoretical assumptions are violated.
We evaluated our algorithm under scenarios with varying numbers of
demonstrations, degree of cluster separation in feature space, number of learned
clusters, number of ground truth elements, cluster imbalance, initial clustering
method (hard $k$-means vs. soft Gaussian Mixture Models), as well as with a
different behaviour models (ML-IRL, and $\Sigma$-GIRL \cite{Babes-Vroman2011,Ramponi2020}).
For each algorithm and each problem instance, we performed over 20 repeat experiments with randomized ElementWorld seeds to generate means and $95\%$ confidence intervals.
For brevity, we detail the nature of each parameter sweep and report the full results as figures in the supplementary file, however we briefly describe a few key findings here.

\paragraph{Hard-clustering vs. soft-clustering.}
We observed that in general, hard clustering with KMeans always tends to out-perform soft clustering with a GMM.
Surprisingly, this is true even as the problem dynamics are made more stochastic (increasing wind factor $w$, which causes the intent clusters to intermingle in feature space).
We hypothesise that the hard clustering leads to less mis-classification of training demonstrations, which may lead to higher-quality rewards to be learned.
Encouragingly, hard clustering always performs on-par with the supervised baseline, with the exception of the case of large numbers of ground truth elements $E$, where there is a performance difference.

\paragraph{Number of learned clusters.}
As our algorithm is parametric, we require that the number of clusters $K$ be specified \apriori{}.
In experiments where we varied the number of learned clusters, keeping $E = 3$ fixed, we observe that the mixture NLL generally increases proportional to $K$, however the ANID and GEVD metrics reach a minima / asymptote respectively at $K=3$ -- this confirms that our metrics are a good measure of ensemble performance, even for the case when the number of learned clusters is mis-specified.
We note that in practice, the value of $K$ can be estimated using heuristics such as the Bayesian Information Criteria, using a non-parametric clustering as a pre-processing step, or using cross-fold validation.

\paragraph{Cluster Imbalance.}
Finally, we also tested the case of non-balanced behaviour intents in the dataset.
Encouragingly, LiMIIRL appears able to learn accurate ensembles in this case, correctly allocating less probability mass to the rarer behaviour intents.

\paragraph{Alternate IRL models.}
LiMIIRL appears to work with ML-IRL and $\Sigma$-GIRL as the IRL model too, however due to the computational complexity of these methods we did not include them in our driver forecasting experiment, below.

\section{Driver Forecasting Experiment}
\label{sec:porto-problem}

We demonstrate the merit of our method by application to a large driver behaviour dataset, consisting of Taxi GPS trajectories collected between 2013--14 in the city of Porto, Portugal \cite{Moreira-Matias2013,Dua2017}.
Our data pre-processing follows that used in \cite{Ziebart2008,Snoswell2020}: a particle filter was used to discretize the GPS trajectories to the road network, giving a deterministic MDP with 292,604 states (road segments) and 594,933 actions (turns at intersections).
As state features, we used the type of road (highway, local street etc.), the number of lanes, as well as a geographical indicator for the region of the city where that road segment is located (Central, North, North east, etc.)
The feature function was obtained by concatenating the individual one-hot feature vectors and multiplying by the length of the road segment in km, resulting in a 24 dimensional state-based feature vector.
We trained all models on paths $\le 300$ road segments in length, and used held-out test sets for evaluation.

We qualitatively observed the presence of multiple behaviour intents in this dataset (e.g. \cref{fig:porto-multimodal}, left), however we have no ground truth knowledge of the number of clusters $K$.
Instead, as a pre-processing step we used a stick-breaking non-parametric GMM in feature space to estimate the appropriate number of clusters for this dataset as $K \approx 3$.
To accelerate the training process, we only perform a partial maximization at the M-step, by truncating each M-step of the EM algorithm after 50 objective function evaluations.
Provided the learned solution is a monotonic improvement in objective than that of the previous iteration the convergence conditions for the EM algorithm will still be satisfied \cite{Dempster1977,Ramachandran2007}.
Experiments were performed on a 12-core Ubuntu workstation with Intel Xeon E5-2760 v3 CPUs.
We summarize the number of iterations and wall time to EM convergence, as well as the final mixture NLL in \cref{tab:porto-results}.
Each value is a mean and $\pm 95\%$ confidence interval across 8 folds of cross validation (1000 / 8000 randomly sampled train / test demonstrations for each fold).

The results align with that of \textit{ElementWorld} -- LiMIIRL with MLE initialisation learns mixtures with similar NLL to random initialisation, however reaches convergence significantly faster - a saving of between 2.16 and 2.33 EM iterations, corresponding to a non-trivial reduction in training time of between 130 and 135 minutes.
We note that the Mean initialisation method does not reduce the convergence time for this problem, which we attribute to the fact that the cluster separation assumption is substantially violated in this dataset.

We also investigate the qualitative behaviour models learned by the algorithms.
The learned reward ensembles were largely similar regardless of initialisation method, with a few exceptions (see supplementary file).
Interestingly, we are also able to identify qualitatively meaningful trends within the learned ensembles -- e.g. it appears a majority (85\% -- 90\%) of taxi drivers in this dataset share in a preference for higher speed limits, and most prefer navigating on routes through the northern and western suburbs -- perhaps due to the presence of a large ring-road highway that spans this area.
The remaining reward functions differed in the geographic and road type preferences - \eg{} preferring eastern suburbs and/or road types other than highways.
This demonstrates that LiMIIRL can be applied on a large, real world dataset to efficiently learn qualitatively meaningful behaviour models in the form of IRL reward ensembles.

\section{Related Work}
\label{sec:related-work}

\vspace{1em}
\paragraph{Alternate IRL formulations.}
IRL is an active research area, with fundamental methods being extended in multiple directions.
Our work is complementary to many such extensions, including works considering multiple interacting agents \cite{Natarajan2010}, or agents with multiple sub-goals \cite{Michini2012,Michini2015,Ding2020}, skills \cite{Ranchod2015}, or options \cite{Henderson2018}, or using adversarial reward learning to scale to high dimensional problems \cite{Fu2018}, or using specialised divergence to match a single intent in a multi-intent dataset \cite{Ke2020}.
Unlike these works, we consider the problem of learning multiple rewards from a
dataset containing several unlabeled behaviour intents \cite{Babes-Vroman2011}.
We highlight that a related but distinct problem is that of Meta-IRL, where the goal is to learn a good meta-reward prior \cite{Gleave2018,Yu2019} or shared base reward \cite{Chen2020} that can be used to learn new rewards efficiently.

\paragraph{Multiple-Intent IRL methods.}
In the MI-IRL setting, methods can be partitioned into parametric (known number of clusters $K$) and non-parametric approaches ($K$ unknown).
While initialisation strategies such as ours may be helpful in Bayesian non-parametric models \cite{Choi2011} or hierarchical non-parametric clustering schemes \cite{Almingol2015}, we leave exploration of this avenue to future work, instead focusing on parametric methods modelled on the Expectation Maximization framework \cite{Dempster1977}.
Specifically, our work is based on the EM-based MI-IRL approach of \cite{Babes-Vroman2011}, which has also been extended to a full Bayesian treatment \cite{Dimitrakakis2011}, and to use model-free gradient based IRL methods \cite{Ramponi2020}.
While our theoretical analysis uses the MaxEnt IRL behaviour model
\cite{Ziebart2008,Snoswell2020}, our empirical experiments with
the ML-IRL model from \cite{Babes-Vroman2011} and
the $\Sigma$-GIRL model of \cite{Ramponi2020} suggest that our initialisation
method is effective with a range of behaviour models.

\paragraph{Initialisation methods for Expectation Maximization.}
Our initialization method is designed to alleviate the problem of learning local minima solutions --- an issue that the family of EM algorithms are well-known to suffer from (as well as slow convergence rates) \cite{Dempster1977,Vlassis2002,Balakrishnan2017}.
As such, the study of good initialisation methods for specific EM algorithms is
an ongoing area of research \cite{Vlassis2002,OHagan2012,OHagan2019}, as is work on finding alternate update rules to address the same issues \cite{Vlassis2002,Mena2020}.
Our work relates to these investigations, however we focus on EM algorithms that are specific to clustering problems, rather than general data-completion problems
--- specifically, we consider intent clustering as encountered in MI-IRL.

\paragraph{IRL evaluation measures}
In addition to developing an initialisation method for EM-based MI-IRL, we propose a novel metric for evaluating the quality of MI-IRL reward ensembles with respect to a known ground-truth ensemble.
Our method extends the popular Expected Value Difference (EVD) and Inverse Learning Error (ILE)\footnote{The ILE is equivalent to EVD with a specific choice of state distribution.} metrics for comparing individual reward pairs \cite{Levine2011,Choi2011}.
Specifically, our GEVD metric generalize these measures to the case of hard or soft reward ensembles with potentially different sizes.
Previous approaches for evaluating the quality of reward ensembles have suffered from two main limitations - they depend on comparing the likelihood or return of a specific test trajectory dataset \cite{Babes-Vroman2011,Almingol2015}, or if not, they require manually matching learned rewards with the corresponding ground truth reward \cite{Dimitrakakis2011,Ramponi2020}.
In contrast, our method does not require a held out testing dataset, and is able to automatically, elegantly and efficiently solve the ensemble matching problem through a min-cost-flow formulation.
Recently, there has been renewed interest in methods for comparing reward pairs (i.e. the single-intent IRL problem) without requiring policy evaluation \cite{Gleave2021}.
These reward pseudo-metrics are complementary to our work, in that a straight-forward application of the min-cost-flow formulation outlined in \cref{sec:mi-irl-performance-metrics} can also extend these evaluation standards to the Multiple-Intent case.
However, while such pseudo-metric can be theoretically and computationally attractive, enforcing symmetry for the learned reward and the ground-truth reward in the metric may be undesirable in the IRL setting, which is asymmetric in nature,
as in general the effect of using a reward $R$ on a domain with reward $R'$ is different from the effect of using a reward $R'$ on a domain with reward $R$.
For example, if $R$ is a constant reward function, and $R'$ is a sparse reward function,
then using $R'$ in a domain with reward function $R$ does not lead to sub-optimal behavior, 
while using $R$ in a domain with reward function $R'$ is usually sub-optimal.

\section{Conclusion}
\label{sec:conclusion}

We presented a simple warm-start strategy for learning MI-IRL models and
demonstrate its effectiveness with both theoretical and empirical analysis.
In addition, we developed a novel performance metric, GEVD, for evaluating a learned
reward ensemble against a ground-truth one.
These contributions further MI-IRL research by developing simple strategies that
can make MI-IRL methods more efficient and practical, with insight on when the
methods can work well.

GEVD provides a natural and direct way to measure the quality of a learned
reward ensemble, but is limited to use with benchmark problems where the ground
truth reward ensemble is known, and policy iteration and evaluation can be used
to compute the regret of policies.
Further research could develop metrics that enjoys the benefits of GEVD but
circumvents these limitations.

Our warm-start strategy requires only mild assumptions, and enjoys provable
provable theoretical guarantee when using the mean initialization and the MaxEnt
IRL behaviour model.
Interestingly, empirical results suggest that the idea is effective for the MLE
initialisation and some other IRL models.
Theoretical results for these cases are non-trivial but will be an interesting
area for future work.
Another interesting area for future work is exploring if initialisation methods such
as ours can also be beneficial in parametric latent variable adversarial models
that are gaining interest in the Imitation Learning literature where policy
learning, not reward learning is the goal \cite{Li2017,Gruver2020}.
Finally, nonlinear rewards, particularly those represented by deep neural
networks, have been explored in various IRL works (e.g., see \cite{Fu2018}).
We can also extend the idea in this paper to provide a warm-start solution when
a pre-trained feature map is available.

\begin{ack}
Aaron Snoswell is supported by through an Australian Government Research Training Program Scholarship.
\end{ack}

\bibliographystyle{plainnat}
\bibliography{bibliography}

\clearpage
\appendix


\section{Theoretical Analysis of the LiMIIRL algorithm -- additional results}

Here we state an additional lemma that is useful for our analysis, and include proofs that are excluded from our main text.

The following simple result is handy useful in the below proofs.

\addtocounter{lemma}{2}
\begin{lemma} \label{lem:basic}
    Let $A = \{a_{1}, \ldots, a_{n}\}$,
    $A' = \{a_{1}', \ldots, a_{n}'\}$
    $B = \{b_{1}, \ldots, b_{m}\}$ and
    $B' = \{b_{1}', \ldots, b_{m}'\}$ be
    finite sets of positive numbers.
    If $\frac{a_{i}}{b_{j}} \ge \frac{a_{i}'}{b_{j}'} \alpha$ for all 
    $1 \le i \le n$,
    $1 \le j \le m$, then for 
    $I \subseteq \{1, \ldots, n\}$,
    $J \subseteq \{1, \ldots, m\}$, we have
    \begin{align}
        \frac{\sum_{i \in I} a_{i}}
        {\sum_{i \in I} a_{i} + \sum_{j \in J} b_{j}}
        &\ge 
        \frac{\sum_{i \in I} a_{i}' \alpha}
        {\sum_{i \in I} a_{i}' \alpha + \sum_{j \in J} b_{j}'} \\
        \frac{\sum_{j \in J} b_{j}}
        {\sum_{i \in I} a_{i} + \sum_{j \in J} b_{j}}
        &\le 
        \frac{\sum_{j \in J} b_{j}'}
        {\sum_{i \in I} a_{i}' \alpha + \sum_{j \in J} b_{j}'}
    \end{align}
\end{lemma}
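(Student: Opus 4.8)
The plan is to reduce both displayed inequalities to a single scalar comparison, namely
\[
\frac{\sum_{j \in J} b_{j}}{\sum_{i \in I} a_{i}}
\;\le\;
\frac{1}{\alpha}\cdot\frac{\sum_{j \in J} b_{j}'}{\sum_{i \in I} a_{i}'},
\]
and then to prove this comparison by clearing denominators and matching the resulting double sums term by term against the hypothesis.

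First I would dispatch the trivial cases ($\alpha = 0$, or $I$ or $J$ empty), where both claimed inequalities hold immediately, so we may assume $\alpha > 0$ and $I, J$ nonempty; then all of $S = \sum_{i\in I} a_{i}$, $T = \sum_{j\in J} b_{j}$, $S' = \sum_{i\in I} a_{i}'$, $T' = \sum_{j\in J} b_{j}'$ are strictly positive. The reduction is just monotonicity: $\frac{S}{S+T} = \frac{1}{1 + T/S}$ and $\frac{T}{S+T} = \frac{T/S}{1 + T/S}$, while $\frac{\alpha S'}{\alpha S' + T'} = \frac{1}{1 + T'/(\alpha S')}$ and $\frac{T'}{\alpha S' + T'} = \frac{T'/(\alpha S')}{1 + T'/(\alpha S')}$. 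Since $t \mapsto \tfrac{1}{1+t}$ is decreasing and $t \mapsto \tfrac{t}{1+t}$ is increasing on $(0,\infty)$, both displayed inequalities are equivalent to $T/S \le T'/(\alpha S')$, i.e. to $\alpha\,T S' \le T' S$.

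For the scalar inequality I would expand $\alpha\,T S' = \alpha \sum_{i \in I}\sum_{j \in J} a_{i}' b_{j}$ and $T' S = \sum_{i \in I}\sum_{j \in J} a_{i} b_{j}'$, and compare the two double sums summand by summand: for fixed $i \in I$, $j \in J$, multiplying the hypothesis $\frac{a_{i}}{b_{j}} \ge \alpha \frac{a_{i}'}{b_{j}'}$ through by $b_{j} b_{j}' > 0$ gives exactly $a_{i} b_{j}' \ge \alpha a_{i}' b_{j}$; summing over all pairs in $I \times J$ yields $\alpha\,T S' \le T' S$, and hence both claimed inequalities. There is no real obstacle here beyond bookkeeping — the only points requiring care are getting the two monotonicity directions right and making sure every quantity in the cross-multiplication is positive, which is exactly what the ``finite sets of positive numbers'' assumption guarantees.
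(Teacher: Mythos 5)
Your proposal is correct and is essentially the paper's argument in a slightly different dress: both proofs reduce the two displayed inequalities to the single ratio comparison $\sum_{j\in J} b_j / \sum_{i\in I} a_i \le \sum_{j\in J} b_j' / (\alpha \sum_{i\in I} a_i')$ (you via monotonicity of $t\mapsto 1/(1+t)$, the paper via ``add 1 to both sides and invert''), and both establish that comparison from the termwise hypothesis (you by one cross-multiplied double sum, the paper by summing over $i$, inverting, then summing over $j$). Your explicit handling of the degenerate cases ($\alpha=0$, empty $I$ or $J$) is a small bonus the paper omits, but the substance is the same.
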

\begin{proof}
    From the assumptions, we have
    \begin{align*}
        \frac{\sum_{i \in I} a_{i}}{b_{j}}
        =
        \sum_{i \in I} \frac{a_{i}}{b_{j}}
        \ge
        \sum_{i \in I} \frac{a_{i}'}{b_{j}'} \alpha
        =
        \frac{\sum_{i \in I} a_{i}' \alpha}{b_{j}'}.
    \end{align*}
    Inverting the fractions on both sides, we have
    \begin{align*}
        \frac{b_{j}}{\sum_{i \in I} a_{i}}
        \le 
        \frac{b_{j}'}{\sum_{i \in I} a_{i} \alpha}.
    \end{align*}
    Summing over $j \in J$ and adding 1 to both sides, we have
    \begin{align*}
        \frac{\sum_{i \in I} a_{i} + \sum_{j \in J} b_{j}}{\sum_{i \in I} a_{i}}
        \le 
        \frac{\sum_{i \in I} a_{i}' \alpha + \sum_{j \in J} b_{j}'}{\sum_{i \in I} a_{i} \alpha}.
    \end{align*}
    Inverting the fractions on both sides, we obtain the desired inequality
    \begin{align*}
        \frac{\sum_{i \in I} a_{i}}{\sum_{i \in I} a_{i} + \sum_{j \in J} b_{j}}
        \ge 
        \frac{\sum_{i \in I} a_{i} \alpha}{\sum_{i \in I} a_{i}' \alpha + \sum_{j \in J} b_{j}'}.
    \end{align*}
    The second inequality in the lemma thus follows easily by subtracting both sides of the inequality by 1.
\end{proof}

\begin{proof}[Proof of Lemma 1.]
    For any $\tau \in \Tau_{k}^{+}$ and any $\tau' \in \Tau_{\sim k}$, we have
    \begin{align*}
        \Phi_{k}^{\top} (\phi(\tau) - \phi(\tau')) 
        &= \frac{1}{|C_{k}|} \sum_{\tau'' \in C_{k}} \phi(\tau'')(\phi(\tau) - \phi(\tau'))
        \\
        &= \frac{1}{|C_{k}|} \sum_{\tau'' \in \Tau_{k}^{+}} \phi(\tau'')(\phi(\tau) - \phi(\tau'))
        + \frac{1}{|C_{k}|} \sum_{\tau'' \notin \Tau_{k}^{+}} \phi(\tau'')(\phi(\tau) - \phi(\tau'))
        \\
        &\ge (1-\delta) \intermargin + \delta (-\intermargin)
        \\
        &= (1 - 2 \delta) \intermargin,
    \end{align*}
    where the inequality follows from Assumption 1(a)(i) and
    Assumption 2.
    
    For any $\tau \in \Tau_{k}^{+}$ and any $\tau' \in \Tau_{k}^{-}$, we have
    \begin{align*}
        \Phi_{k}^{\top} (\phi(\tau) - \phi(\tau')) 
        &= \frac{1}{|C_{k}|} \sum_{\tau'' \in C_{k}} \phi(\tau'')(\phi(\tau) - \phi(\tau'))
        \\
        &= \frac{1}{|C_{k}|} \sum_{\tau'' \in \Tau_{k}^{+}} \phi(\tau'')(\phi(\tau) - \phi(\tau'))
        + \frac{1}{|C_{k}|} \sum_{\tau'' \notin \Tau_{k}^{+}} \phi(\tau'')(\phi(\tau) - \phi(\tau'))
        \\
        &\ge (1-\delta) \intramargin + \delta (-\intramargin) \\
        &=  (1-2\delta) \intramargin.
    \end{align*}
\end{proof}

\begin{proof}[Proof of Lemma 2.]
    Let $w(\tau) = q(\tau) \exp(\phi(\tau)^{\top} \Phi_{k})$,
    $W_{k} = \sum_{\tau \in \Tau_{k}} w(\tau)$,
    $W_{\sim k} = \sum_{\tau \in \Tau_{\sim k}} w(\tau)$,
    $W_{k}^{+} = \sum_{\tau \in \Tau_{k}^{+}} w(\tau)$,
    $W_{k}^{-} = \sum_{\tau \in \Tau_{k}^{-}} w(\tau)$, and 
    $W = \sum_{\tau \in \Tau} w(\tau)$.
    We obtain a lower bound on 
    \begin{align*}
        p(\tau \mid \Phi_{k})
        =
        \frac{w(\tau)}
        {W}
        =
        \frac{W_{k}}{W}
        \frac{w(\tau)}{W_{k}}
    \end{align*}
    by lower bounding
    $\frac{W_{k}}{W}$
    and 
    $\frac{w(\tau)}{W_{k}}$
    respectively.
    
    To lower bound $\frac{W_{k}}{W}$, first note that $W_{k} \ge W_{k}^{+}$, thus 
    \begin{align}
        \frac{W_{k}}{W} 
        =
        \frac{W_{k}}{W_{k} + W_{\sim k}}
        \ge 
        \frac{W_{k}^{+}}{W_{k}^{+} + W_{\sim k}},
        \label{lem:lu:eq:1}
    \end{align}
    and it suffices to lower bound the last fraction.
    From Lemma 1, $\Phi_{k}$ $\tilde{\intermargin}$-separates $\Tau_{k}^{+}$ and
    $\Tau_{\sim k}$, thus for any $\tau \in \Tau_{k}^{+}$ and any $\tau' \in \Tau_{\sim k}$, we have
    \begin{align*}
        \frac{w(\tau)}{w(\tau')} 
        = 
        \frac{q(\tau) \exp((\phi(\tau) - \phi(\tau'))^{\top} \Phi_{k})}
        {q(\tau')}
        \ge 
        \frac{q(\tau)}
        {q(\tau')} 
        \exp(\tilde{\intermargin}),
    \end{align*}
    where the equality is obtained by dividing both numerator and
    denominator by $\exp(\phi(\tau)^{\top} \Phi_{k})$. 
    Using \cref{lem:basic}, we have
    \begin{align}
        \frac{W_{k}^{+}}
        {W_{k}^{+} + W_{\sim k}}
        &=
        \frac{\sum_{\tau \in \Tau_{k}^{+}} w(\tau)}
        {\sum_{\tau' \in \Tau_{k}^{+}} w(\tau')
            + \sum_{\tau' \notin \Tau_{k}} w(\tau')} \nonumber \\
        &\ge 
        \frac{\sum_{\tau \in \Tau_{k}^{+}} q(\tau) \exp(\tilde{\intermargin})}
        {\sum_{\tau' \in \Tau_{k}^{+}} q(\tau') \exp(\tilde{\intermargin})
            + \sum_{\tau' \in \Tau_{\sim k}} q(\tau')} \nonumber \\
        &=
        \frac{
            Q_{k}^{+} \exp(\tilde{\intermargin})
        }{
            Q_{k}^{+} \exp(\tilde{\intermargin}) + Q_{\sim k}
        } \nonumber \\
        &= 
        \frac{(1-\zeta) Q_{k} \exp(\tilde{\intermargin})}
        {(1-\zeta) Q_{k} \exp(\tilde{\intermargin}) + Q_{\sim k}}.
        \label{lem:lu:eq:2}
    \end{align}
    
    Now, to lower bound 	
    $\frac{w(\tau)}
    {W_{k}}$, note that since $\Phi_{k}$
    $\tilde{\intramargin}$-separates $\Tau_{k}^{+}$ and $\Tau_{k}^{-}$, we have
    for any $\tau \in \Tau_{k}^{+}$ and any $\tau' \in \Tau_{k}^{-}$
    $\frac{w(\tau)}{w(\tau')} 
    =
    \frac{q(\tau) \exp((\phi(\tau) - \phi(\tau'))^{\top} \Phi_{k})}
    {q(\tau')}
    \ge 
    \frac{q(\tau)}{q(\tau')} e^{\tilde{\intramargin}}$.
    Hence 
    \begin{align*}
        \frac{W_{k}^{+}}
        {W_{k}^{-}}
        =
        \frac{\sum_{\tau \in \Tau_{k}^{+}} w(\tau)} 
        {\sum_{\tau \in \Tau_{k}^{-}} w(\tau)} 
        \ge 
        \frac{\sum_{\tau \in \Tau_{k}^{+}} q(\tau)}
        {\sum_{\tau \in \Tau_{k}^{-}} q(\tau)} e^{\tilde{\intramargin}}
        \ge 
        \frac{(1-\zeta) Q_{k}}
        {\zeta Q_{k}} e^{\tilde{\intramargin}}
        =
        \frac{(1-\zeta) e^{\tilde{\intramargin}}}
        {\zeta}.
    \end{align*}
    Thus we have 
    $\frac{\zeta + (1-\zeta) e^{\tilde{\intramargin}}}
    {(1-\zeta) e^{\tilde{\intramargin}}}
    W_{k}^{+} 
    \ge
    W_{k}$.
    As a consequence, 
    \begin{align}
        \frac{w(\tau)}{W_{k}}
        \ge 
        \frac
        {(1-\zeta) e^{\tilde{\intramargin}}}
        {\zeta + (1-\zeta) e^{\tilde{\intramargin}}}
        \frac{w(\tau)}{W_{k}^{+}}.
        \label{lem:lu:eq:3}
    \end{align}
    On the other hand, we have
    \begin{align}
        \frac{
            w(\tau)
        }{
            W_{k}^{+}
        } &= \frac{
            q(\tau)
        }{
            \sum_{\tau' \in \Tau_{k}^{+}} q(\tau') \exp((\phi(\tau') - \phi(\tau))^{\top} \Phi_{k})
        }
        \nonumber
        \\
        &\ge \frac{
            q(\tau)
        }{
            \sum_{\tau' \in \Tau_{k}^{+}} q(\tau') \exp( 2r_{k} \norm{\Phi_{k}} )
        }
        \nonumber
        \\
        &=
        \frac{
            q(\tau) \exp(-2r_{k} \norm{\Phi_{k}})
        }{
            Q_{k}^{+}
        } \nonumber\\
        &\ge 
        \frac{
            q(\tau) \exp(-2r_{k} \norm{\Phi_{k}})
        }{
            Q_{k}
        } 
        \label{lem:lu:eq:4}
    \end{align}
    where the inequality follows from $(\phi(\tau') - \phi(\tau))^{\top} \Phi_{k}) \le \norm{\phi(\tau') - \phi(\tau)} \norm{\Phi_{k})} \le 2 r_{k} \norm{\Phi_{k}}$.
    Combining \cref{lem:lu:eq:1}-\cref{lem:lu:eq:4}, for any $\tau \in \Tau_{k}$, we have
    \begin{align*}
        p(\tau \mid \Phi_{k}) 
        &= 
        \frac{
            w(\tau)
        }{
            W
        }
        \ge 
        \frac{W_{k}}{W}
        \frac{w(\tau)}{W_{k}}
        \\
        &\ge 
        \frac{(1-\zeta) Q_{k} \exp(\tilde{\intermargin})}
        {(1-\zeta) Q_{k} \exp(\tilde{\intermargin}) + Q_{\sim k}}
        \frac
        {(1-\zeta) e^{\tilde{\intramargin}}}
        {\zeta + (1-\zeta) e^{\tilde{\intramargin}}}
        \frac{
            q(\tau) \exp(-2r_{k} \norm{\Phi_{k}})
        }{
            Q_{k}
        }  \\
        &=
        \frac{(1-\zeta) q(\tau) \exp(\tilde{\intermargin}-2r_{k} \norm{\Phi_{k}})}
        {(1-\zeta) Q_{k} \exp(\tilde{\intermargin}) + Q_{\sim k}}
        \frac
        {(1-\zeta) e^{\tilde{\intramargin}}}
        {\zeta + (1-\zeta) e^{\tilde{\intramargin}}}
    \end{align*}
    Now, for any $\tau \notin \Tau_{k}$, we have
    \begin{align*}
        p(\tau \mid \Phi_{k}) 
        &=
        \frac{w(\tau)}
        {\sum_{\tau' \in \Tau} w(\tau')}
        \\
        &\le 
        \frac{w(\tau)} 
        {\sum_{\tau' \in \Tau_{k}^{+}} w(\tau') + w(\tau)} \\
        &\le \frac{
            q(\tau)
        }{
            (1-\zeta) Q_{k} \exp(\tilde{\intermargin}) + q(\tau)
        },
    \end{align*}
    where the last inequality follows from \cref{lem:basic}.
\end{proof}

\begin{proof}[Proof of Theorem 1]
    Consider an arbitrary $\tau_{i}$.
    Assume that $\tau_{i} \in C_{k}$, and consider $k' \neq k$, then by Lemma 2,
    \begin{align*}
        p(\tau_{i} \mid \Phi_{k}) 
        &\ge 
        \frac{(1-\zeta) q(\tau_{i}) e^{\tilde{\intermargin}-2r_{k} \norm{\Phi_{k}}}}
        {(1-\zeta) Q_{k} e^{\tilde{\intermargin}} + Q_{\sim k}}
        \frac
        {(1-\zeta) e^{\tilde{\intramargin}}}
        {\zeta + (1-\zeta) e^{\tilde{\intramargin}}},
        \\
        p(\tau_{i} \mid \Phi_{k'}) 
        &\le 
        \frac{q(\tau_{i})}{(1 - \zeta) Q_{k'} e^{\tilde{\intermargin}} + q(\tau_{i})}.
    \end{align*}
    This gives us the following bound
    \begin{align*}
        \frac{\rho_{k}^{(0)} p(\tau_{i} \mid \Phi_{k})}
        {\rho_{k'}^{(0)} p(\tau_{i} \mid \Phi_{k'})}
        \ge
        \frac
        {(1-\zeta)^{2} e^{\tilde{\intramargin}}}
        {\zeta + (1-\zeta) e^{\tilde{\intramargin}}}
        \frac{|C_{k}| ((1-\zeta) Q_{k'} e^{\tilde{\intermargin}} + q(\tau_{i}))}
        {|C_{k'}| ((1-\zeta) Q_{k} e^{\tilde{\intermargin}} +Q_{\sim k})}
        e^{\tilde{\intermargin} - 2r_{k} \norm{\Phi_{k}}}
        \ge \beta.
    \end{align*}
    Hence we have 
    \begin{align*}
        u_{ik}^{(1)} &= \frac{
            \rho_{k}^{(0)} p(\tau_{i} \mid \theta_{k}^{(0)})
        }{
            \sum_{k'} \rho_{k'}^{(0)} p(\tau_{i} \mid \theta_{k'}^{(0)})
        },
        \\
        &= \frac{
            \rho_{k}^{(0)} p(\tau_{i} \mid \theta_{k}^{(0)})
        }{
            \rho_{k}^{(0)} p(\tau_{i} \mid \theta_{k}^{(0)}) + \sum_{k' \neq k} \rho_{k'}^{(0)} p(\tau_{i} \mid \theta_{k'}^{(0)})
        }, \\
        &\ge \frac{
            \beta
        }{
            \beta + K-1
        },
    \end{align*}
    where the inequality is obtained with the help of Lemma 3 (in supplementary file).
    Since $u_{ik}^{(0)} = 1$, we have
    \begin{align}
        \sum_{k'} |u_{ik'}^{(1)} - u_{ik'}^{(0)}| &= 2 |u_{ik}^{(1)} - 1|
        \\
        &\le \frac{
            2(K-1)
        }{
            \beta + K - 1
        }.
    \end{align}
    Since at least $1-\delta$ of the trajectories are typical, and for the
    remaining trajectories, $\sum_{k'} |u_{ik'}^{(1)} - u_{ik'}^{(0)}| \le 2$, we
    have
    \begin{align*}
        \frac{1}{|\Data|} \sum_{i} \sum_{k} |u_{ik}^{(1)} - u_{ik}^{(0)}| 
        &\le 
        2\delta + 
        (1-\delta)
        \frac{2 (K-1)}{\beta + K - 1}.
    \end{align*}
\end{proof}

\clearpage
\section{Societal Impact Discussion}

Our work is mainly algorithmic and theoretical and therefore somewhat indirectly linked to potential negative societal impacts - however we briefly discuss such potential impacts here.

\paragraph{Methodology.}
On the methodology side, our LiMIIRL algorithm could help enable the application of MI-IRL methods to larger or more substantive real-world problems, and our GEVD metric also provides insights on when these Multi-Intent techniques may work well.

Such Multi-Intent IRL methods allow for teasing-out individual user preferences in a cohort of un-labelled data - as such, they might be considered to pose a slightly higher risk in terms of surveillance or revelation of user preferences, compared to previous work on the (single intent) IRL problem.
However for the most part, our present work mostly magnifies any existing potential negative or positive impacts of IRL methods generally.
The most common IRL algorithms function by observing state and action sequences, which typically requires consent of the subject (c.f. state-only observations which can often be collected by surveillance in the absence of consent or even in adversarial contexts) - we consider this to be a strong mitigating factor regarding the potential negative impacts of IRL algorithms as a class of machine learning models.
Furthermore, in our work, we are also concerned with learning simple (linear) rewards for the purpose of understanding behaviour, rather than complicated (e.g. deep neural network) rewards for replicating behaviour through apprenticeship learning.
As such, our models do not directly interact with the real world in the sense that e.g. an apprenticeship learning agent might, but rather are useful for interrogating the world and building a better understanding of certain behaviours, which potentially limits or mitigates negative impact from this work.

\paragraph{Application.}
The example application that we consider (driver behaviour forecasting) is incidental to our methodological contributions and was selected in part due to it's low potential for negative societal impacts, and large potential for positive societal impact.

\clearpage
\section{Evaluation Measures for MI-IRL Algorithms}

We provide some additional comments and insight on our MI-IRL evaluation measures.
Implementations of the below measures are included in our open-source project repository.

\paragraph{Clustering Performance -- ANID metric.}

Given a set $\mathds{S}$ of $N$ items, a hard (or patitional) clustering divides the set into non-overlapping sub-sets, $\mathds{U} \triangleq \{\mathds{U}_1, \dots, \mathds{U}_{K_1}\}$ with $\bigcup_{i=1}^{K_1} \mathds{U}_i = N$ and $\mathds{U}_i \bigcap \mathds{U}_j = \emptyset ~\forall j \ne i$.
Given a second hard clustering $\mathds{V} \triangleq \{\mathds{V}_1, \dots, \mathds{V}_{K_2}\}$, the \textit{contingency table} $\bm{N}$ with elements $n_{ij} \triangleq \abs{\mathds{U}_i \bigcap \mathds{V}_j}$ and marginals $a_i \triangleq \sum_{j=1}^{K_2} n_{ij}$ and $b_j \triangleq \sum_{i=1}^{K_1} n_{ij}$ allows calculating various information-theoretic measures about the two clusterings,
\begin{align*}
    H(\mathds{U}) \triangleq -\sum_{i=1}^{K_1} \frac{a_i}{N} \log \frac{a_i}{N}, \quad&\quad
    H(\mathds{V}) \triangleq -\sum_{j=1}^{K_2} \frac{b_i}{N} \log \frac{b_i}{N},
    \\
    H(\mathds{U} \mid \mathds{V}) \triangleq -\sum_{i=1}^{K_1} \sum_{j=1}^{K_2} \frac{n_{ij}}{N} \log \frac{n_{ij} / N}{b_j / N}, \quad&\quad
    H(\mathds{V} \mid \mathds{U}) \triangleq -\sum_{i=1}^{K_1} \sum_{j=1}^{K_2} \frac{n_{ij}}{N} \log \frac{n_{ij} / N}{a_i / N},
    \\
    H(\mathds{U}, \mathds{V}) =
    H(\mathds{V}, \mathds{U}) &\triangleq -\sum_{i=1}^{K_1} \sum_{j=1}^{K_2} \frac{n_{ij}}{N} \log \frac{n_{ij}}{{N}},
    \\
    I(\mathds{U}, \mathds{V}) = 
    I(\mathds{V}, \mathds{U}) &\triangleq \sum_{i=1}^{K_1} \sum_{j=1}^{K_2} \frac{n_{ij}}{N} \log \frac{n_{ij} / N}{a_i b_j / N^2},
\end{align*}
where $H$ is a (Shannon) entropy, and $I$ is known as the \textit{mutual information} - a measure of how much information the two clustering share.

Various measures based on these terms have been proposed as ways to evaluate the similarity of hard clusterings, however recent work has highlighted the strengths (as compared to other metrics that have historically been popular, such as the clustering purity \citep{Zhao2001}, Rand Index (or adjusted variant) \citep{Rand1971}, and other information theoretic measures) of the max-Normalized Information Distance (NID) due to it's beneficial theoretical and empirical properties \citep{Vinh2010}.
The NID is given by,
\begin{align}
    \text{NID} &\triangleq 1 - \frac{
        I(\mathds{U},\mathds{V})
    }{
        \max \{ H(\mathds{U}),H(\mathds{V}) \}
    }.
\end{align}

In the context of multi-modal IRL, an algorithm may in-fact produce a so-called soft (probabilistic) assignment of demonstrations to behaviour modes.
A soft clustering of $N$ items into $K_1$ clusters can be represented by a \textit{responsibility matrix} $\bm{U} \in \mathds{R}^{N\times K_1}$ with elements $u_{ij} \ge 0$ and $\sum_{j=1}^{K_1} u_{ij} = 1 ~\forall i$.
Given a second soft clustering $\bm{V} \in \mathds{R}^{N \times K_2}$, work by \citep{Anderson2010} proposes an analogous contingency table, $\bm{N}^* \triangleq \bm{U}^\top \bm{V}$, which allows applying any metric based on contingency tables to the case of soft clusters, and subsequent work by \citet{Lei2014,Lei2017} shows that the NID is again a good choice of metric for evaluating soft clusterings.
In our experiments, we use the \textit{adjusted} Normalized Information Distance (ANID), which extends the NID to control for random chance - without this adjustment, the apparent agreement between clusters will be positively correlated with the number of clusters $K_1$ or $K_2$ \citep{Romano2016}.
The ANID is given by,
\begin{align}
    \text{ANID} &\triangleq 1 - \frac{
        I(\bm{U},\bm{V}) - \E[I(\bm{U}',\bm{V}')]
    }{
        \max \{ H(\bm{U}),H(\bm{V}) \} - \E[I(\bm{U}',\bm{V}')]
    },
\end{align}
where $\bm{U}'$ and $\bm{V}'$ are from an appropriate random clustering model -- for hard clusters hypergeometric models are commonly used, an in our soft clustering case we choose $\text{dim}(\bm{U}') = \text{dim}(\bm{U})$ and sample rows from the uniform Dirichlet distribution $u_i \sim \text{GEM}(1/K_1)$, and similarly for $\bm{V}'$.
The ANID has a number of useful properties - it is stochastically normalized to the range $[0, 1]$, with values approaching zero indicating perfect agreement between the two clusters.
Furthermore, in the limit $N \rightarrow \infty$, the ANID obeys the triangle equality and converges to a proper metric \citep{Romano2016}.

\paragraph{Reward Ensemble Performance -- GEVD metric.}

To evaluate single-intent IRL algorithms, \citet{Choi2011} proposed the \textit{Inverse Learning Error} (ILE), which was further refined by \citet{Bogert2016},
\begin{align}
    \text{ILE}(R_\text{GT}, R_\text{L}) &\triangleq \norm{
        \bm{v}(\pi^*_{R_\text{GT}}) - \bm{v}(\pi^*_{R_\text{L}})
    }_1,
    & \in [0, \infty)
\end{align}
where $\bm{v}(\pi)$ indicates the vector of state values \wrt{} the \emph{ground truth} reward $R_\text{GT}$ for any arbitrary policy $\pi$, and $\pi^*_{R_\text{GT}}$ and $\pi^*_{R_\text{L}}$ denote the optimal policy \wrt{} the ground truth and learned reward functions respectively.
The ILE is an appropriate measure of regret for the IRL context, however has the limitation that it grows with the number of states in the MDP, making it difficult to compare across different tasks, and difficult to apply to MDPs with continuous state spaces.

Instead, \citet{Levine2011} proposed a generalisation\footnote{EVD generalized ILE in the sense that ILE can be viewed as EVD computed under a uniform starting state distribution.} of this measure, known as the \textit{Expected Value Difference} (EVD),
\begin{align}
    \text{EVD}(R_\text{GT}, R_\text{L}) &\triangleq
    \E_{s \sim p_0} [
    v(\pi^*_{R_\text{GT}})
    ] - 
    \E_{s \sim p_0} [
    v(\pi^*_{R_\text{L}})
    ]
    & \in [0, \infty)
\end{align}
where $v(\pi)$ indicates the state-value function for the arbitrary policy $\pi$ \wrt{} the \emph{ground truth} reward $R_\text{GT}$, and $\pi^*_{R_\text{GT}}$ and $\pi^*_{R_\text{L}}$ denote the optimal policy \wrt{} the ground truth and learned reward functions respectively.
While the EVD has the same range as the ILE\footnote{\textit{N.b.} it is possible to upper-bound these single-intent IRL measures more tightly, using the same approach we apply to our GEVD metric (see the main text).}, the upped bound does not necessarily depend on the full set of states in the MDP, and we can more easily compute the EVD for continuous state-space MDPs.

Unlike a single-intent IRL method, MI-IRL algorithms output a set of learned reward function-weight pairs $\mathds{R}_\text{L} = \{ (w'_i, R_{\text{L}_i}) \}_{i=1}^{K_1}$, which we desire to compare with the set of ground truth reward functions and their corresponding weights $\mathds{R}_\text{GT} = \{ (w_j, R_{\text{GT}_j}) \}_{j=1}^{K_2}$.
To achieve this comparison, we find the most optimistic soft matching of learned and ground truth rewards.
Given $e_{ij}$, the matrix of EVD values for the $i$-th learned model \wrt{} the $j$-th ground truth model, we define the Generalized Expected Value Difference (GEVD) as,
\begin{align}
    \text{GEVD}(\mathds{R}_\text{GT}, \mathds{R}_\text{L}) &\triangleq
    \argmin_{\{w_{ij}\}: \sum_{j} w_{ij} =
        w_{i}, \sum_{i} w_{ij} = w'_{j}} \sum_{i, j} w_{ij} ~ e_{ij},
    & \in [0, \frac{\max_k \Delta_k}{1 - \gamma}]
\end{align}

\noindent where $\Delta_k = \max_{s, a, s'} R_{{GT}_k}(s, a, s') - \min_{s,a,s'} R_{{GT}_k}(s, a, s')$.
Computing the GEVD is equivalent to solving a minimum cost flow problem over a dense directed graph from learned to ground truth reward modes, where
\begin{enumerate}[(a)]
    \item there is a single unity demand source (S) and sink (T), and
    \item the inner and outer edges have zero cost and capacity equal to the weight of that mixture component, and
    \item the inside edges have unity capacity and costs equal to the pairwise EVD values,
\end{enumerate}

\noindent a construction we illustrate in \cref{fig:min-cost-flow}.

\begin{figure}[h]
    \centering
    \includegraphics[width=0.7\textwidth]{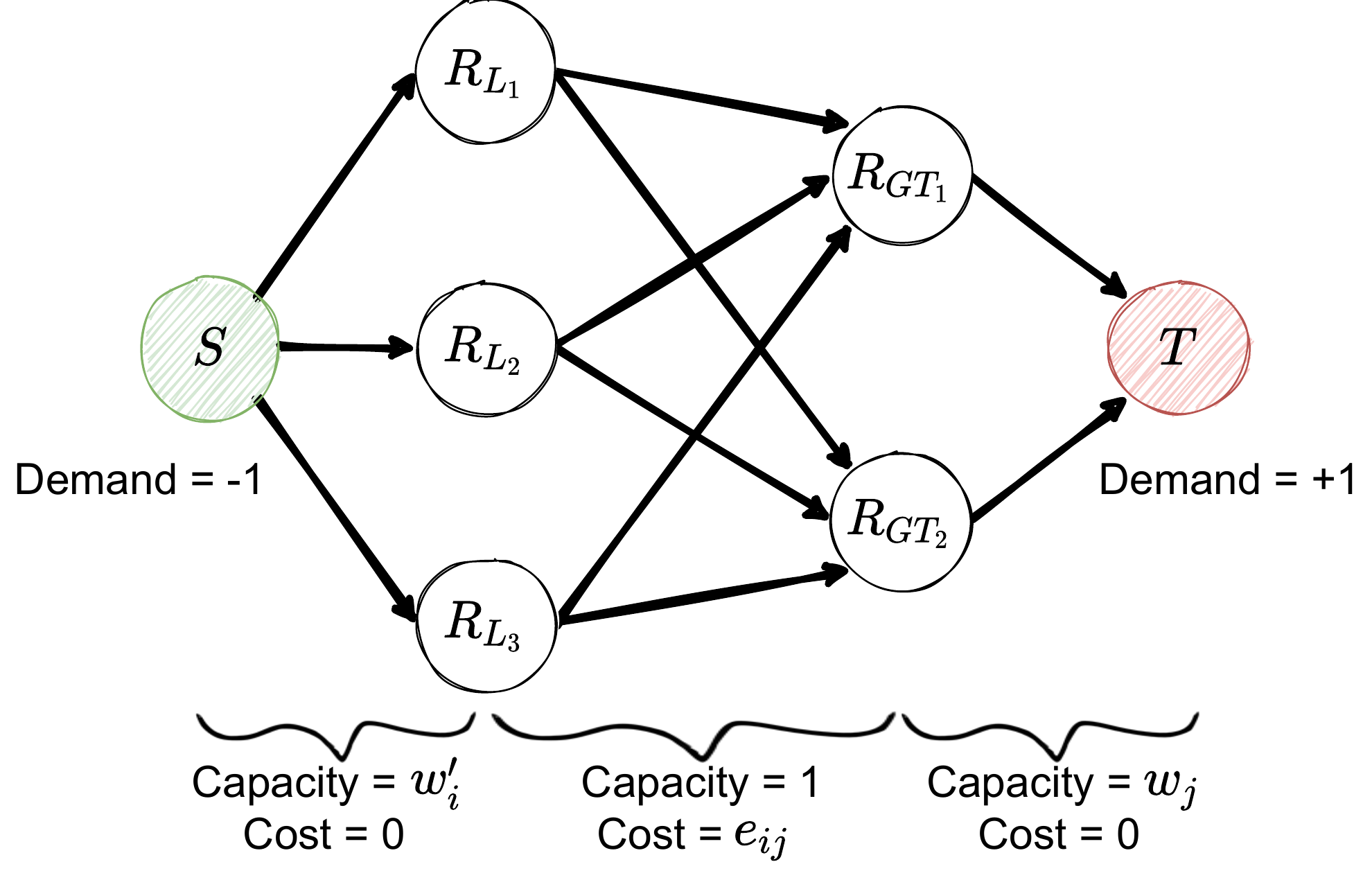}
    \caption{%
        The GEVD Score finds the most optimistic pairing between a learned mixture model $\{ \mathds{R}_{\text{L}_i} \}_{i=1}^{K_1}$ and the ground truth mixture model $\{ \mathds{R}_{\text{GT}_j} \}_{j=1}^{K_2}$.
    }
    \label{fig:min-cost-flow}
\end{figure}

\clearpage
\section[LiMIIRL algorithm - full results]{LiMIIRL algorithm - extended results}
\label{subsec:lim-results}

We provide additional results for LiMIIRL on the \textit{ElementWorld} problem when using 
different single-intent IRL models, namely, MaxEnt IRL, ML-IRL, and $\Sigma$-GIRL.
In each case, we run a MI-IRL algorithm multiple times and represent the runs using a boxplot. 
We use $100\times$ repeat experiments with randomized seeds for the MaxEnt IRL plots, and $20\times$ repeat experiments for the ML-IRL and $40\times$ repeat experiments for $\Sigma$-GIRL experiments (both of which are much more computationally demanding to run).
The boxes are centered at the median and extend to the first quartile, with whiskers extending to $1.5 \times$ the Inter Quartile Range.
Outlying values beyond $\pm 1.5$ IQR are shown with small diamonds.
For all metrics, lower values are better.
The results are shown in \cref{fig:ew-maxent,fig:ew-maxlik,fig:ew-sgirl}.

\paragraph{A note on $\Sigma$-GIRL Experiments}

The `Multiple-Intent' Expectation Maximization $\Sigma$-GIRL algorithm \cite{Ramponi2020} solves a similar but fundamentally different problem to that solved by Multiple-Intent Maximum Likelihood IRL \cite{Babes-Vroman2011} and Multiple-Intent Maximum Entropy IRL (the primary algorithm used in our paper).

Namely, $\Sigma$-GIRL addresses a partially labelled variant of the Multiple-Intent problem, where we have $N$ \textit{trajectories} generated by $M \le N$ \textit{agents}, who optimize for $K \le M$ particular behaviour \textit{intents}.
In $\Sigma$-GIRL, the mapping from trajectories to agents is known as part of the problem definition.
This information is used to form policy gradient Jacobian mean and covariance estimates (one for each agent), which are then provided to the EM process to cluster the agents into intents.

In contrast, our present work and the original MI ML-IRL algorithm by \cite{Babes-Vroman2011} consider a fully-unsupervised version of the MI-IRL problem, where there is no such notion of `agents'.
Although the fully-unsupervised setting can be seen as a special case of the partially supervised case with $M = N$ agents (i.e. one trajectory per agent), 
the $\Sigma$-GIRL algorithm cannot be directly applied in this context due to the need for more than one trajectory to estimate the policy gradient Jacobian covariance matrix.

Nevertheless, we can utilise the LiMIIRL initialisation strategy for the un-labelled part of the $\Sigma$-GIRL algorithm  (clustering agents into intents), and we include experimental results for this case below, run using code provided by the original authors.
Specifically, for each of the $M$ agents, we compute the empirical feature expectation using their (known) individual sets of demonstrations, then run a hard ($k$-means) or soft (GMM) clustering procedure to compute the initial responsibility matrix $\{u_{ik}^{(0)}\}$ allocating agents to intents.
We then use this responsibility matrix in lieu of the first E-step of the MI $\Sigma$-GIRL EM routine, which proceeds to iteratively allocate agent Jacobian matrices to behavioural intents.
This is functionally equivalent to the LiMIIRL MLE initialisation strategy, but using the $\Sigma$-GIRL single-intent IRL algorithm as the underlying behaviour model.
It is less clear how to apply the LiMIIRL Mean initialisation method with $\Sigma$-GIRL, however this may be possible by exploiting feature transformations to convert the MDP feature domain to the simplex reward parameter domain searched by the $\Sigma$-GIRL algorithm -- an idea we leave for future work.

Due to the semi-supervised nature of the $\Sigma$-GIRL algorithm, and it's prohibitive memory complexity, our $\Sigma$-GIRL experiments used the simple continuous \textit{PuddleWorld} domain from the original $\Sigma$-GIRL paper (Figure 2 from \cite{Ramponi2020}).

\clearpage
\thispagestyle{empty}

\begin{figure}[p]
    \begin{subfigure}[t]{0.48\linewidth}
        \centering
        \makebox[\textwidth][c]{\includegraphics[width=\linewidth]{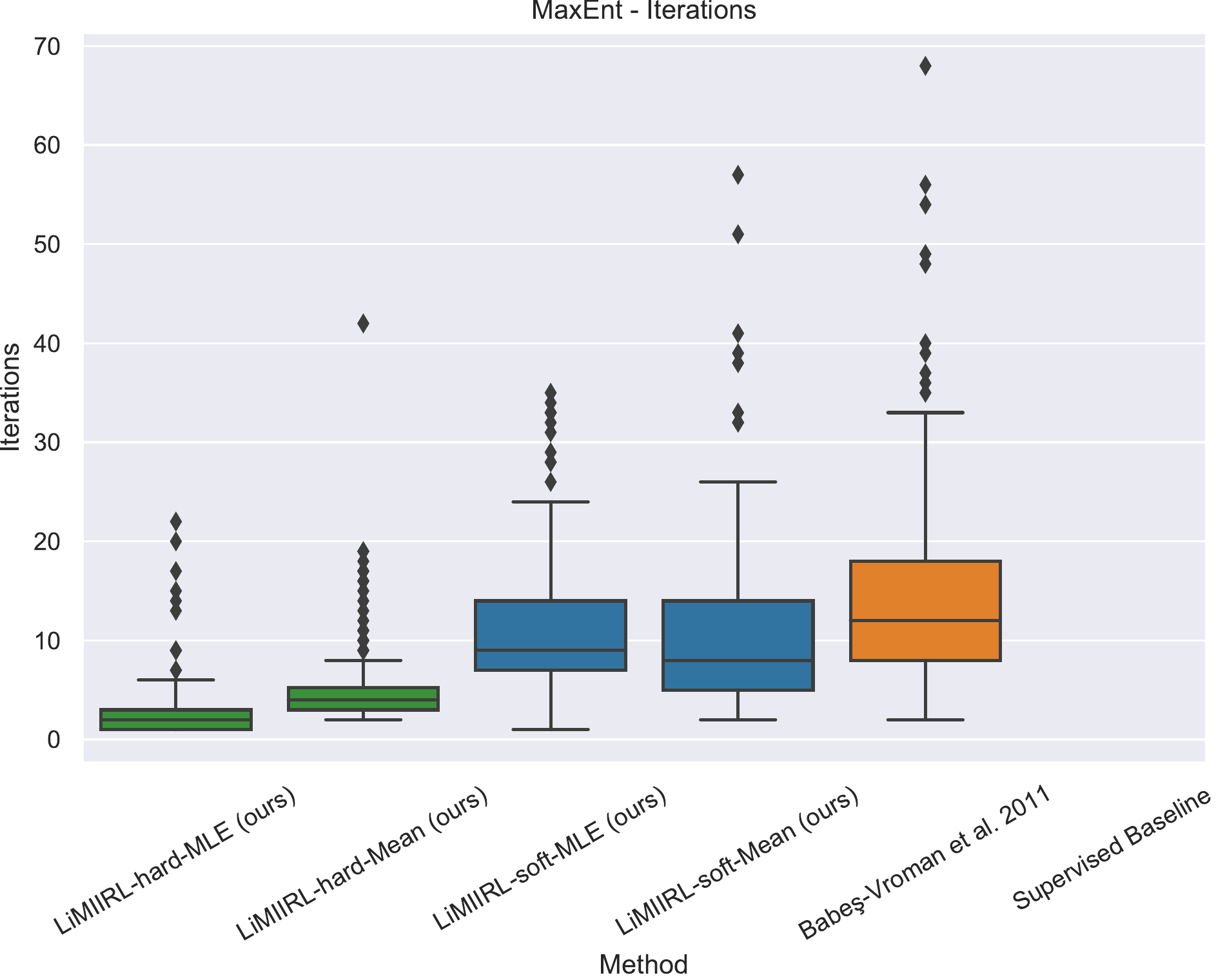}}
        \caption[]{%
            \textit{number of iterations}
        }
        \label{fig:ew-maxent-iterations}
        \vspace{20pt}
    \end{subfigure}
    \begin{subfigure}[t]{0.48\linewidth}
        \centering
        \makebox[\textwidth][c]{\includegraphics[width=\linewidth]{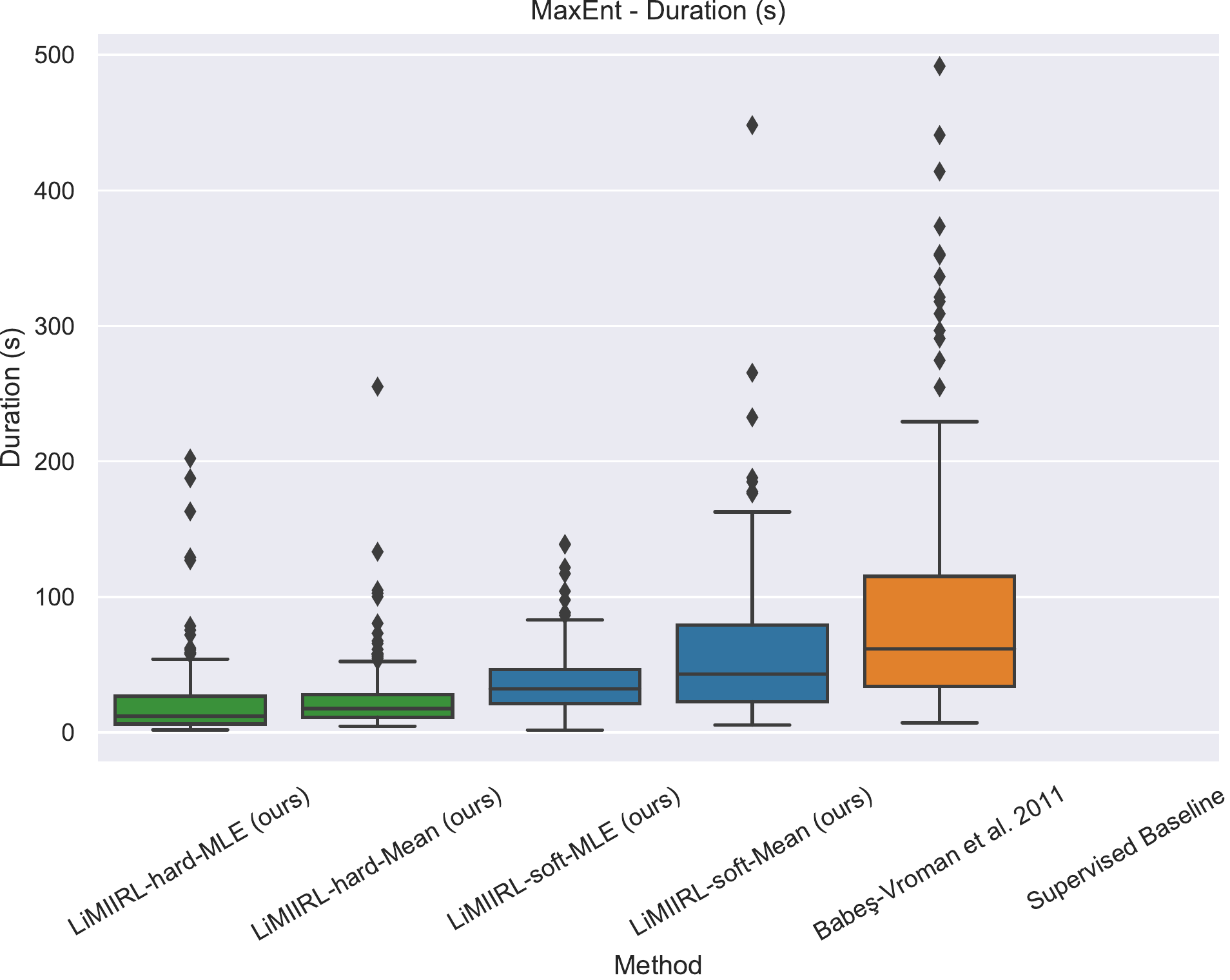}}
        \caption[]{%
            \textit{wall time in seconds}
        }
        \label{fig:ew-maxent-duration}
    \end{subfigure}
    
    \begin{subfigure}[t]{0.48\linewidth}
        \centering
        \makebox[\textwidth][c]{\includegraphics[width=\linewidth]{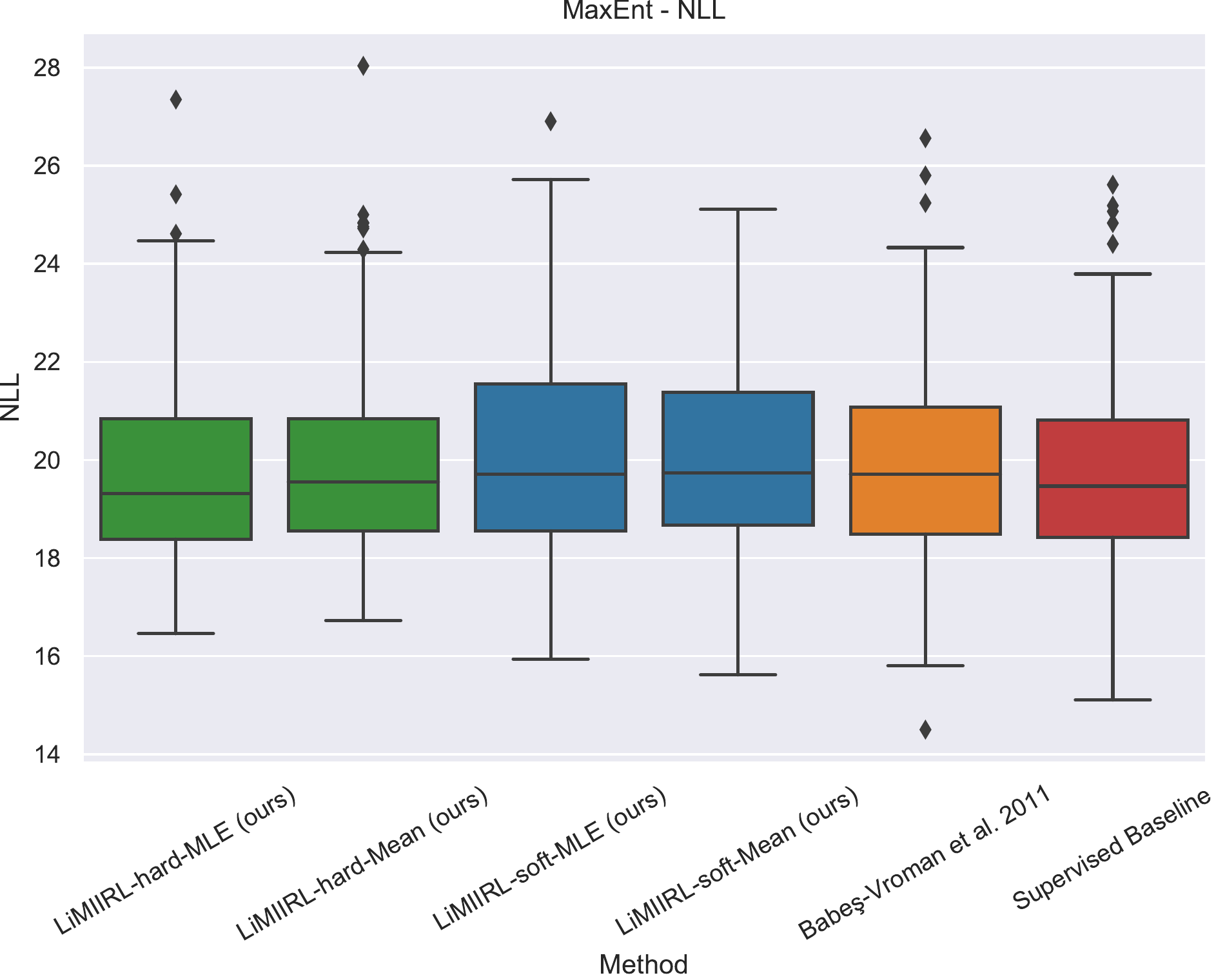}}
        \caption[]{%
            \textit{final mixture NLL}
        }
        \label{fig:ew-maxent-nll}
        \vspace{20pt}
    \end{subfigure}
    \begin{subfigure}[t]{0.48\linewidth}
        \centering
        \makebox[\textwidth][c]{\includegraphics[width=\linewidth]{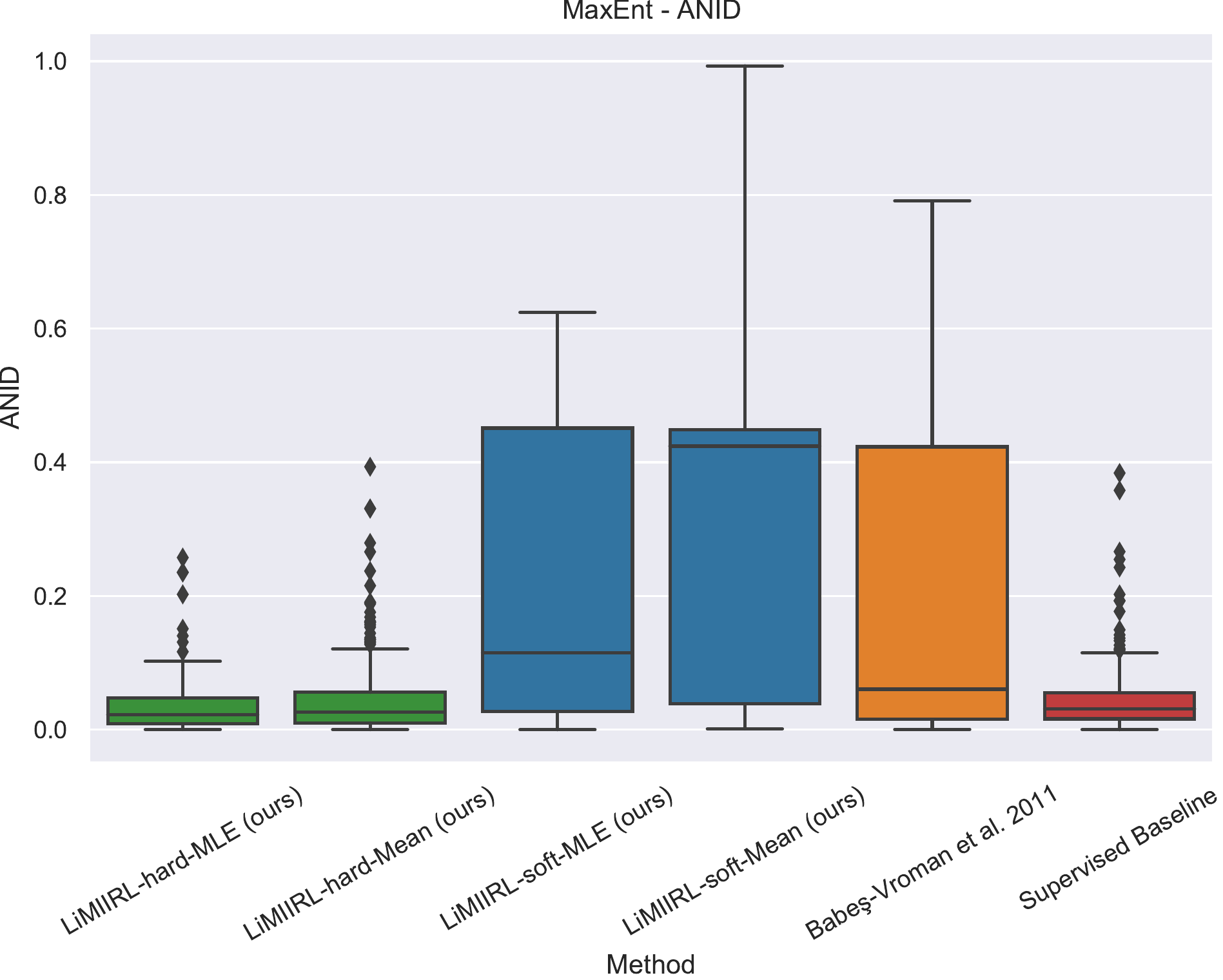}}
        \caption[]{%
            \textit{clustering performance (ANID)}
        }
        \label{fig:ew-maxent-anid}
    \end{subfigure}
    
    \begin{subfigure}[t]{0.48\linewidth}
        \centering
        \makebox[\textwidth][c]{\includegraphics[width=0.9\linewidth]{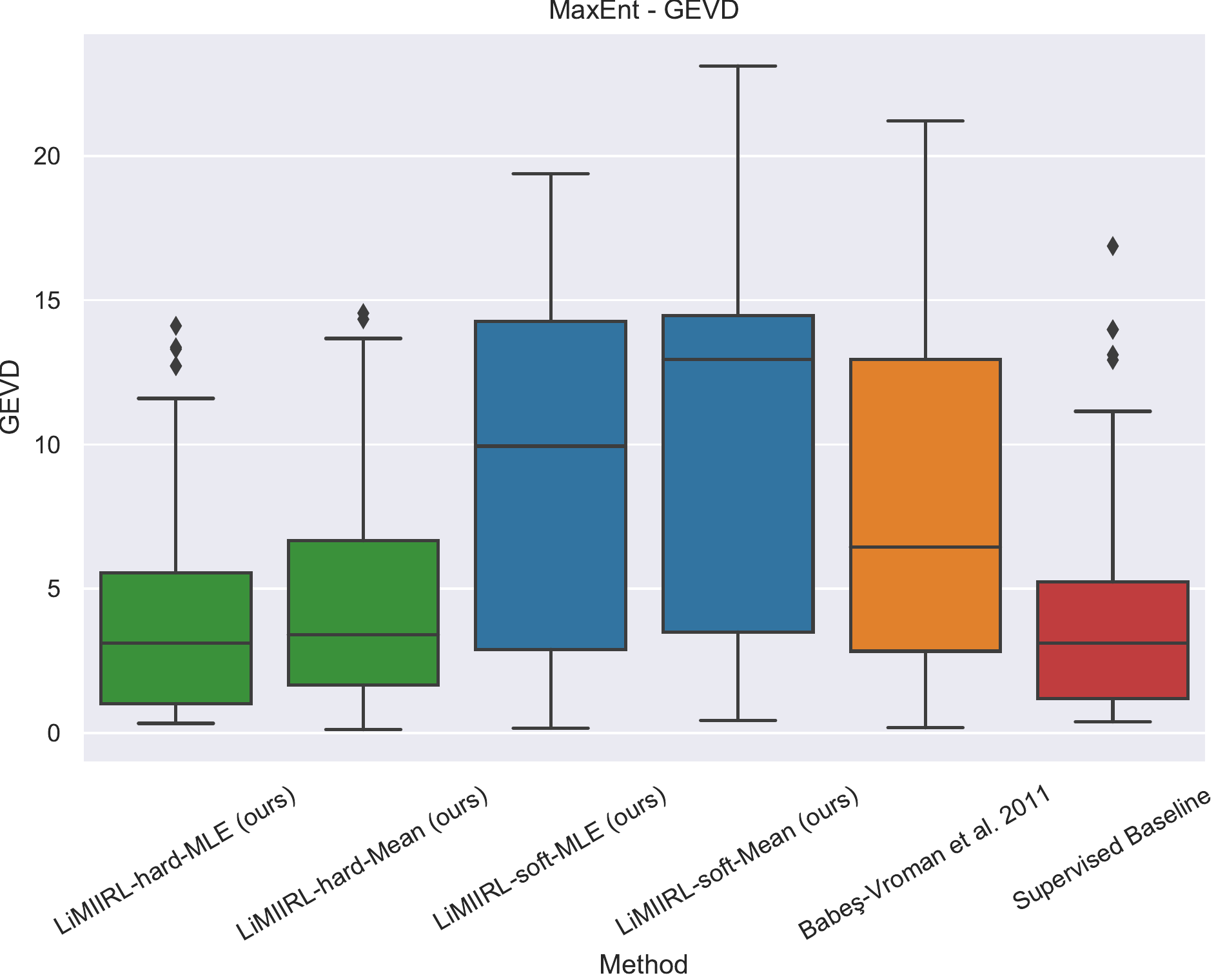}}
        \caption[]{%
            \textit{reward ensemble performance (GEVD)}
        }
        \label{fig:ew-maxent-gevd}
    \end{subfigure}
    \caption{
        \textit{ElementWorld} results with multi-intent MaxEnt IRL.
        The label `Babe\c{s}-Vroman et al. 2011' indicates EM with random initialisation.
        Although all methods result in a similar final NLL distribution, we highlight that there are significant differences in the other performance metrics, with the hard-clustering strategy consistently performing best.
    }
    \label{fig:ew-maxent}
\end{figure}

\clearpage
\thispagestyle{empty}

\begin{figure}[p]
    \begin{subfigure}[t]{0.48\linewidth}
        \centering
        \makebox[\textwidth][c]{\includegraphics[width=\linewidth]{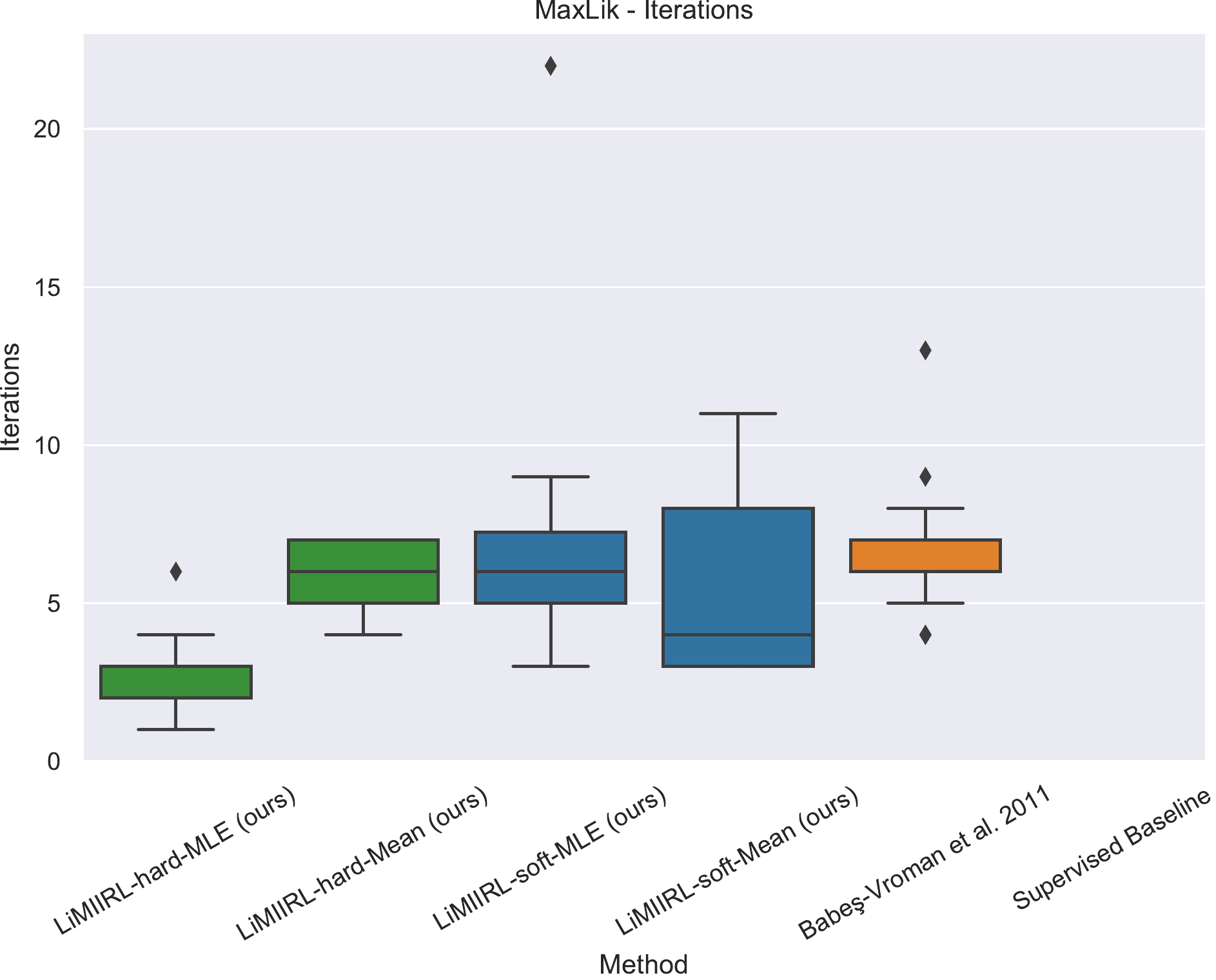}}
        \caption[]{%
            \textit{number of iterations}
        }
        \label{fig:ew-maxlik-iterations}
        \vspace{20pt}
    \end{subfigure}
    \begin{subfigure}[t]{0.48\linewidth}
        \centering
        \makebox[\textwidth][c]{\includegraphics[width=\linewidth]{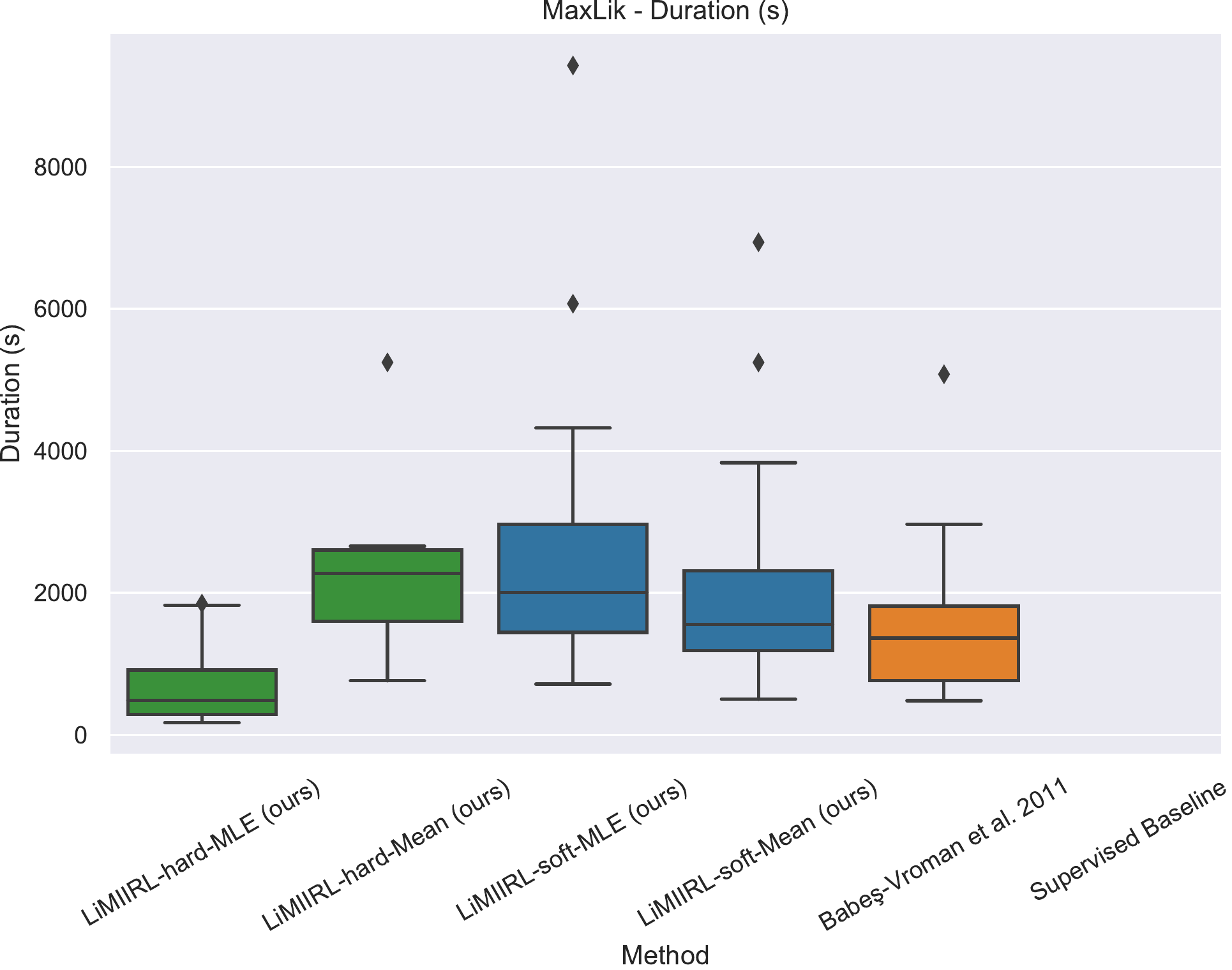}}
        \caption[]{%
            \textit{wall time in seconds}
        }
        \label{fig:ew-maxlik-duration}
    \end{subfigure}
    
    \begin{subfigure}[t]{0.48\linewidth}
        \centering
        \makebox[\textwidth][c]{\includegraphics[width=\linewidth]{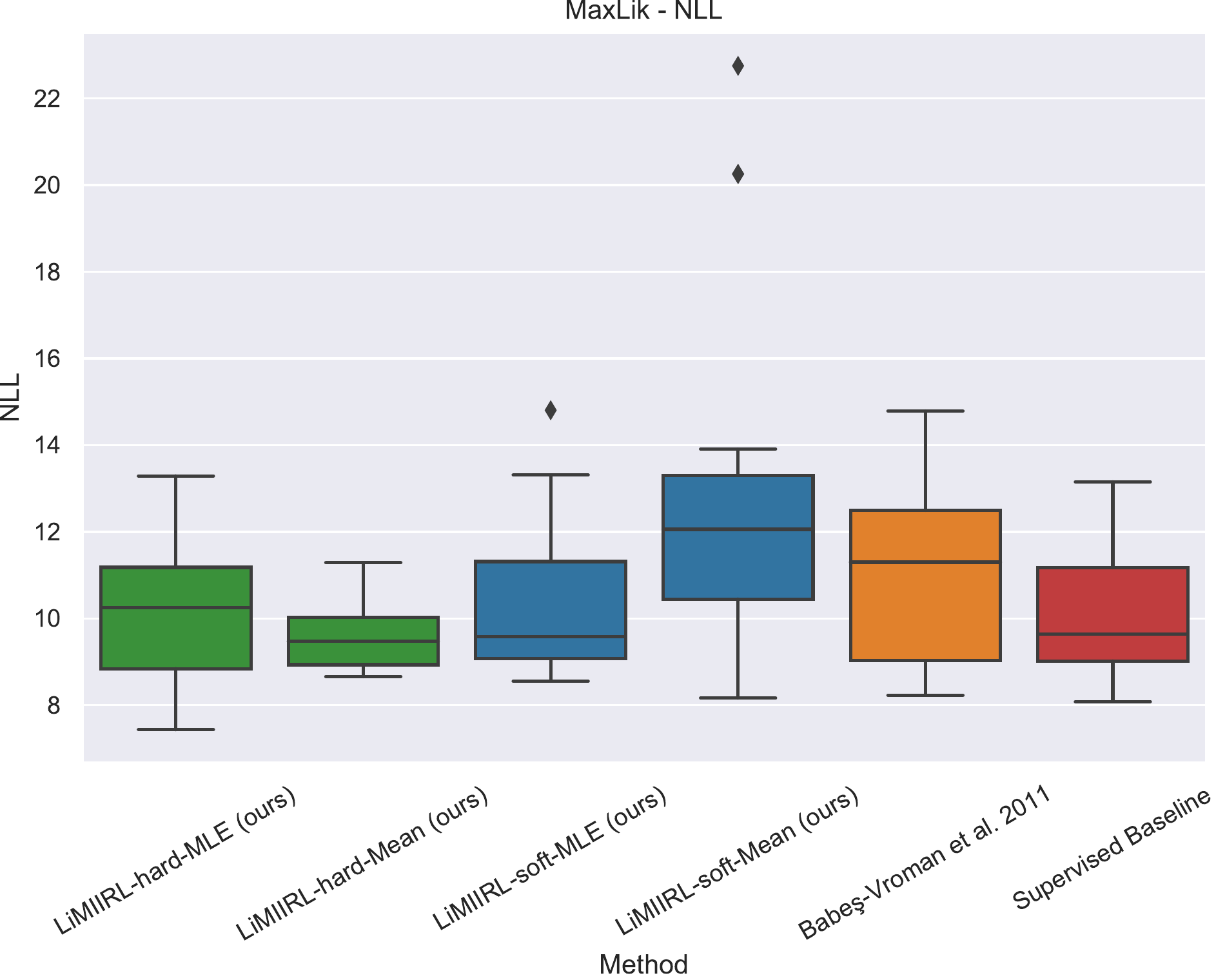}}
        \caption[]{%
            \textit{final mixture NLL}
        }
        \label{fig:ew-maxlik-nll}
        \vspace{20pt}
    \end{subfigure}
    \begin{subfigure}[t]{0.48\linewidth}
        \centering
        \makebox[\textwidth][c]{\includegraphics[width=\linewidth]{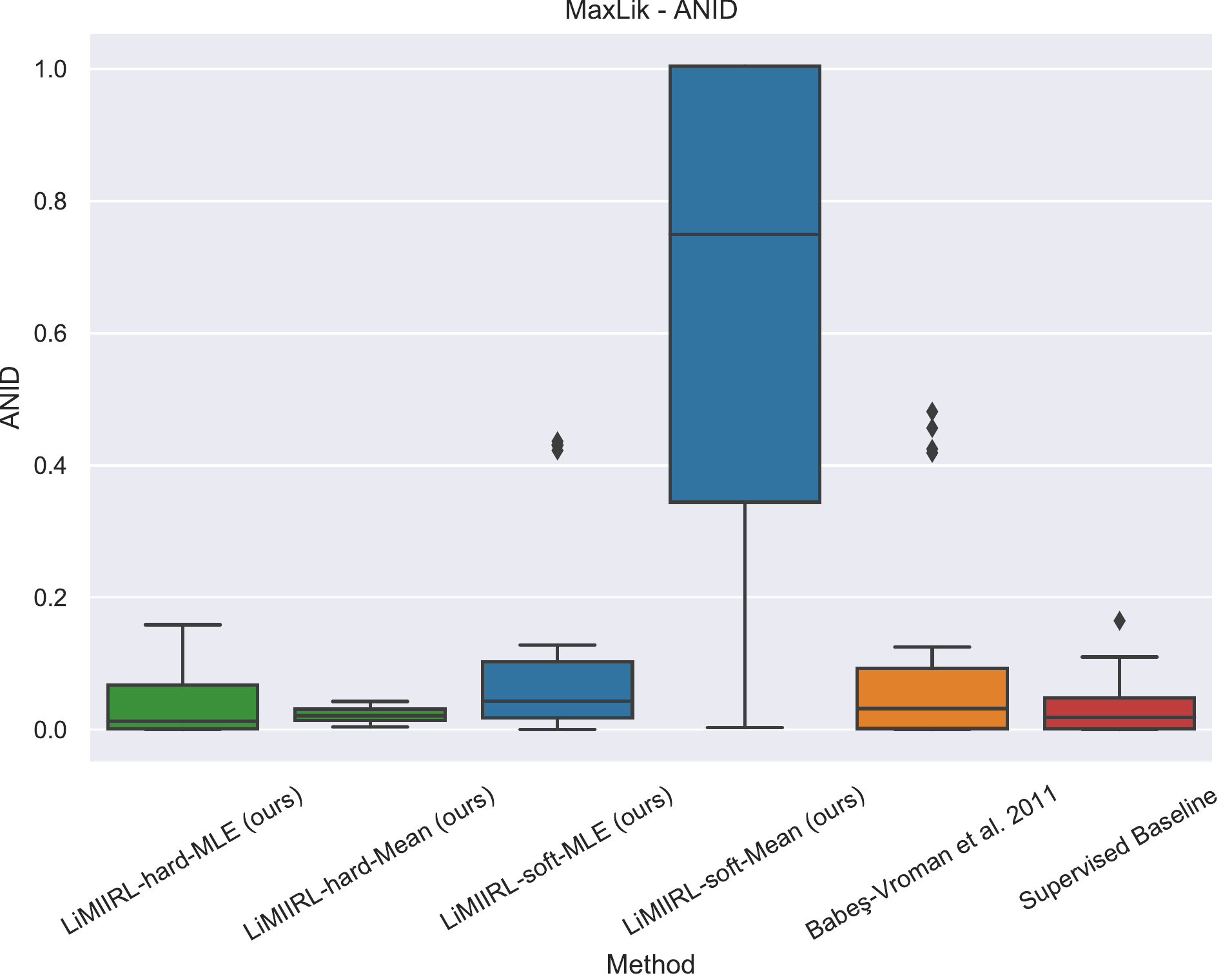}}
        \caption[]{%
            \textit{clustering performance (ANID)}
        }
        \label{fig:ew-maxlik-anid}
    \end{subfigure}
    
    \begin{subfigure}[t]{0.48\linewidth}
        \centering
        \makebox[\textwidth][c]{\includegraphics[width=0.9\linewidth]{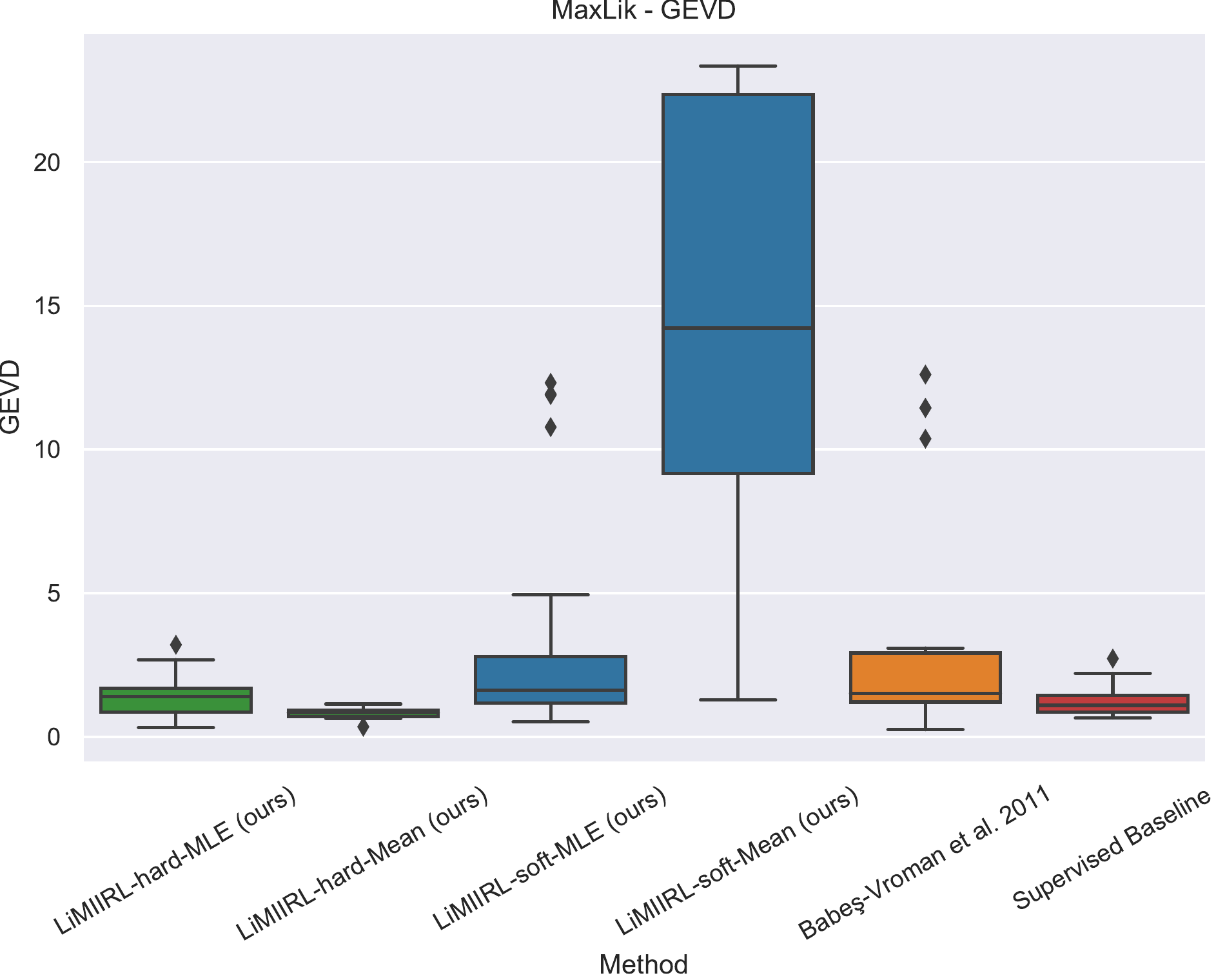}}
        \caption[]{%
            \textit{reward ensemble performance (GEVD)}
        }
        \label{fig:ew-maxlik-gevd}
    \end{subfigure}
    \caption{%
        \textit{ElementWorld} results with multi-intent ML-IRL.
        The label `Babe\c{s}-Vroman et al. 2011' indicates EM with random initialisation.
        Similarly to MaxEnt IRL (above), the hard clustering strategy consistently performs better than other initialisation strategies.
    }
    \label{fig:ew-maxlik}
\end{figure}

\clearpage
\thispagestyle{empty}

\begin{figure}[p]
    \begin{subfigure}[t]{0.48\linewidth}
        \centering
        \makebox[\textwidth][c]{\includegraphics[width=\linewidth]{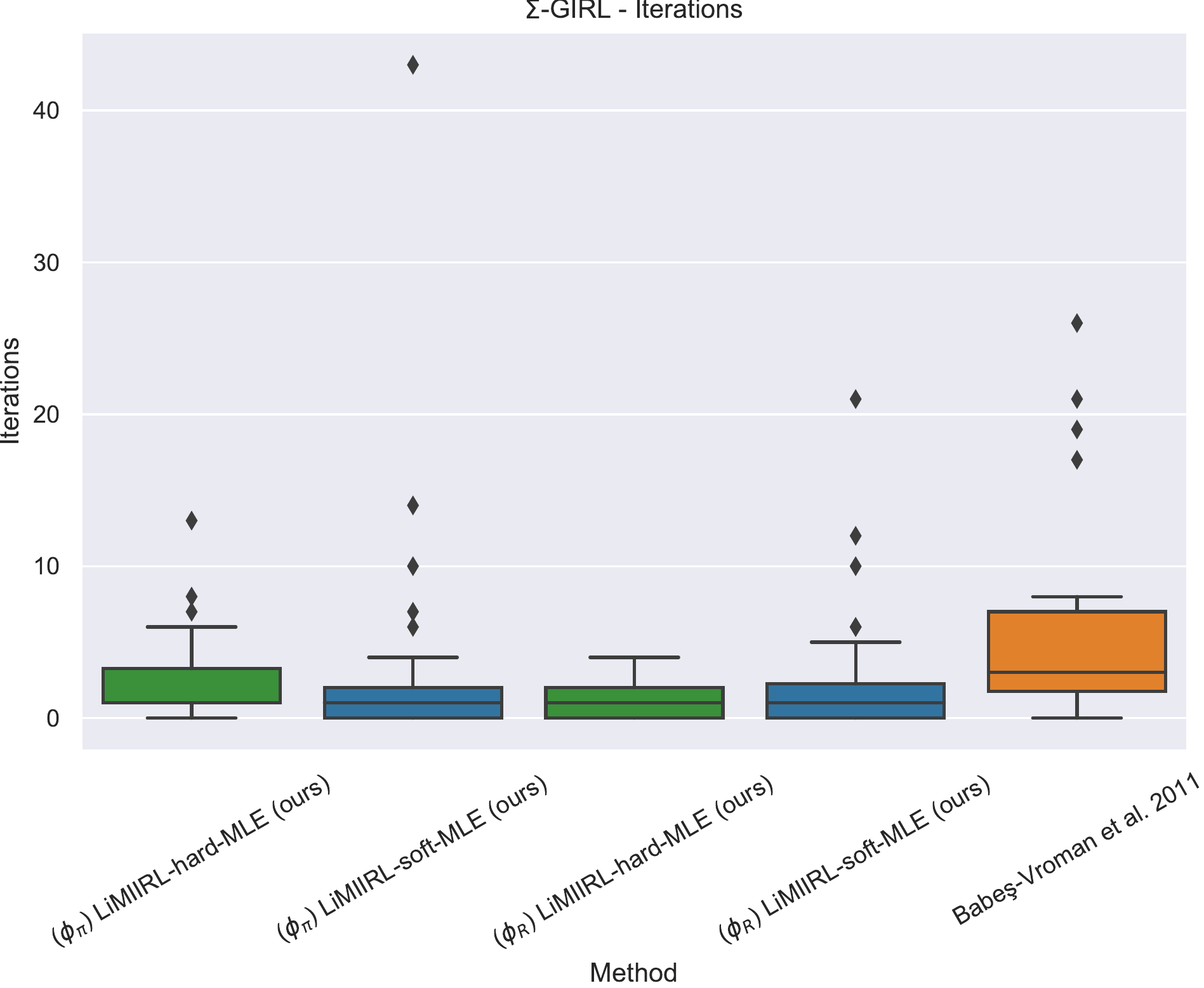}}
        \caption[]{%
            \textit{number of iterations.}
        }
        \label{fig:ew-sgirl-iterations}
        \vspace{20pt}
    \end{subfigure}
    \begin{subfigure}[t]{0.48\linewidth}
        \centering
        \makebox[\textwidth][c]{\includegraphics[width=\linewidth]{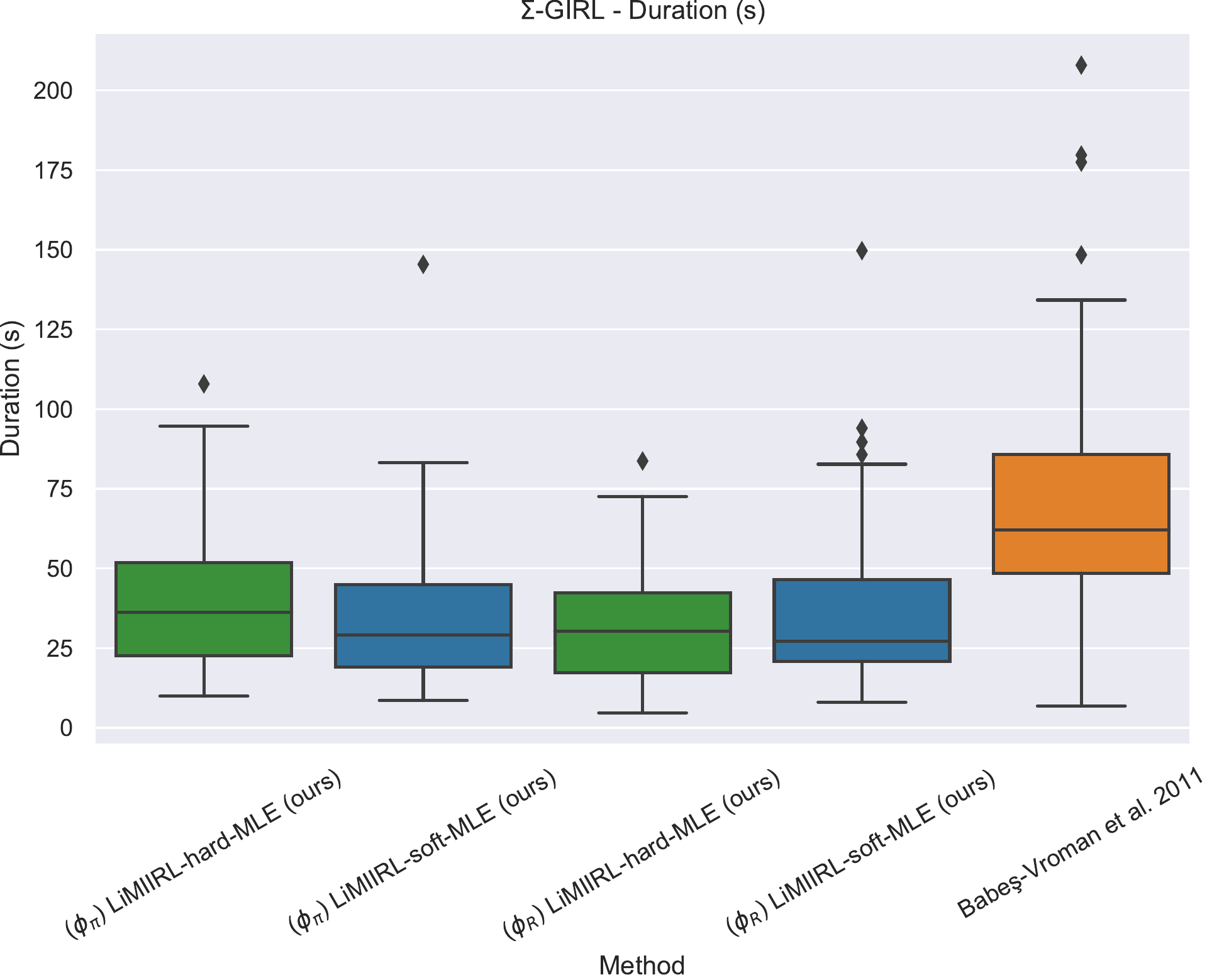}}
        \caption[]{%
            \textit{wall time in seconds.}
        }
        \label{fig:ew-sgirl-duration}
    \end{subfigure}
    
    \begin{subfigure}[t]{0.48\linewidth}
        \centering
        \makebox[\textwidth][c]{\includegraphics[width=\linewidth]{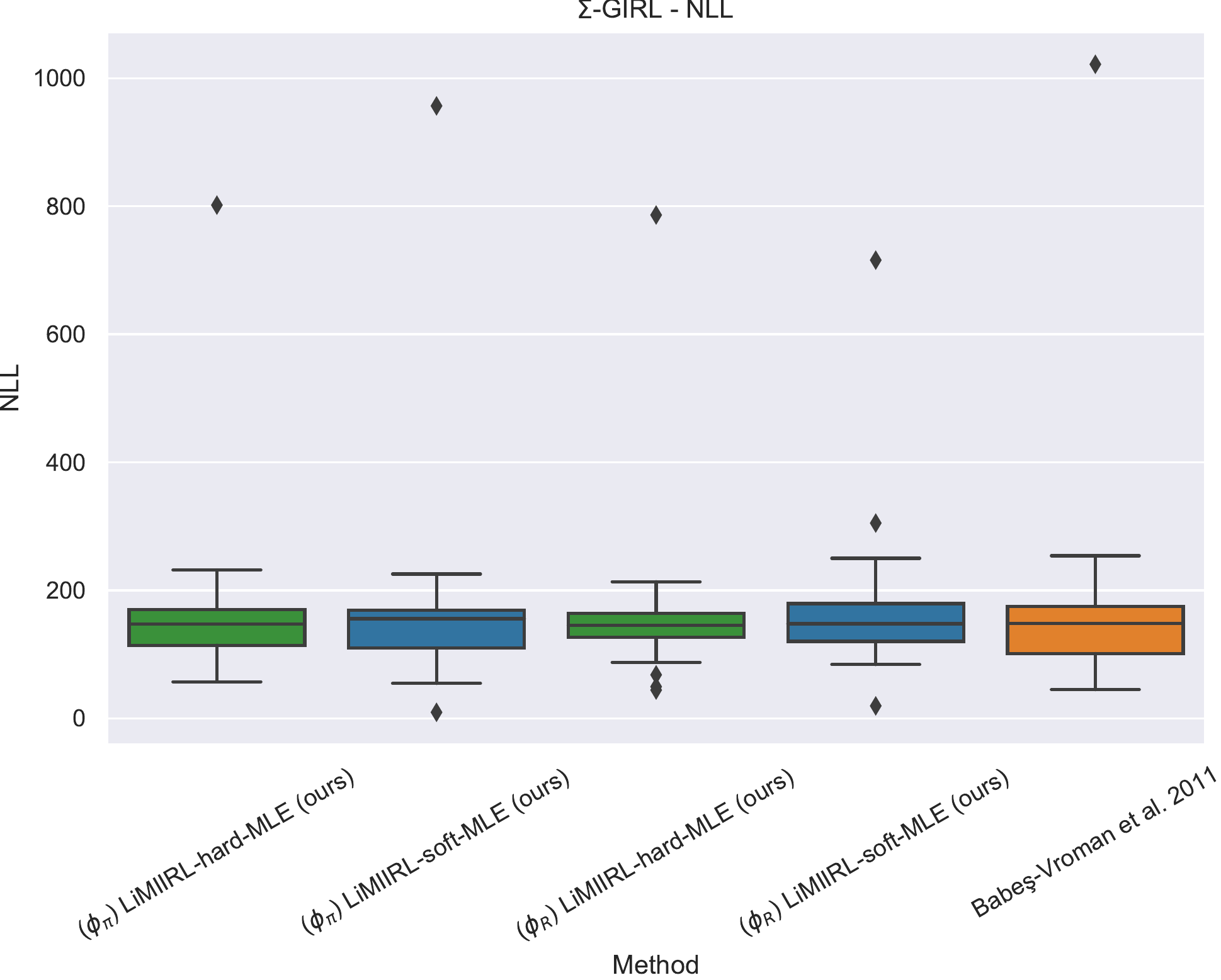}}
        \caption[]{%
            \textit{final mixture NLL}
        }
        \label{fig:ew-sgirl-nll}
        \vspace{20pt}
    \end{subfigure}
    \begin{subfigure}[t]{0.48\linewidth}
        \centering
        \makebox[\textwidth][c]{\includegraphics[width=\linewidth]{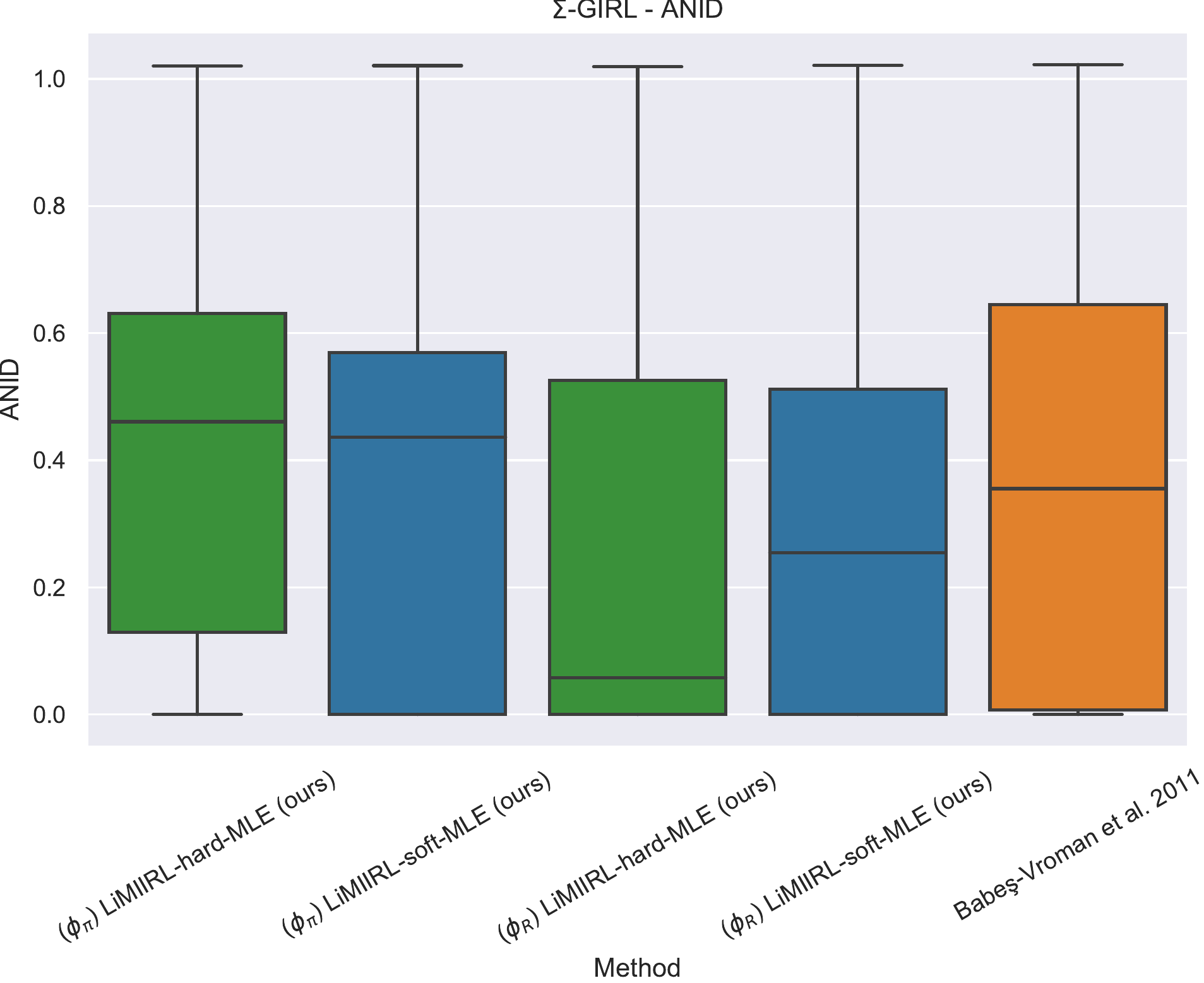}}
        \caption[]{%
            \textit{clustering performance (ANID)}
        }
        \label{fig:ew-sgirl-anid}
    \end{subfigure}
    \caption{%
        \textit{PuddleWorld} results with semi-supervised multi-intent $\Sigma$-GIRL (see explanation in \cref{subsec:lim-results}).
        The label `Babe\c{s}-Vroman et al. 2011' indicates EM with random initialisation.
        The $\Sigma$-GIRL algorithm differentiates between the reward feature function $\phi_R$ and the policy feature function $\phi_\pi$ in this environment -- we report results using both as the choice of feature vector for LiMIIRL initialisation, and find that they are roughly equivalent.
        For this problem, we find LiMIIRL leads to ensembles with equivalent ANID and NLL scores and provides a small but not statistically significant improvement on number of iterations and duration to convergence.
        We hypothesize that LiMIIRL is less helpful in these experiments (compared to the fully-unsupervised MI-IRL \textit{ElementWorld} experiments) due to one of two factors: (a) the simplicity of the continuous \textit{PuddleWorld} problem (i.e. it only has two behaviour intents), and/or (b) the fact that the $\Sigma$-GIRL algorithm has already exploited the labels mapping trajectories to agents, reducing the variance present in the resultant EM clustering problem.
    }
    \label{fig:ew-sgirl}
\end{figure}

\clearpage
\section{LiMIIRL Sensitivity Analysis results}

Below we include a series of figures plotting the results of our extensive sensitivity analysis.
We briefly describe the procedure used for each experiment, and any interesting observations.

\paragraph{Number of demonstrations.}
We varied the total number of demonstrations with $N \in \{20, 40, 100, 200, 400\}$ to investigate the performance of our method in the small- and large-data regimes.

\begin{figure}[h]
    \centering
    \includegraphics[width=\linewidth]{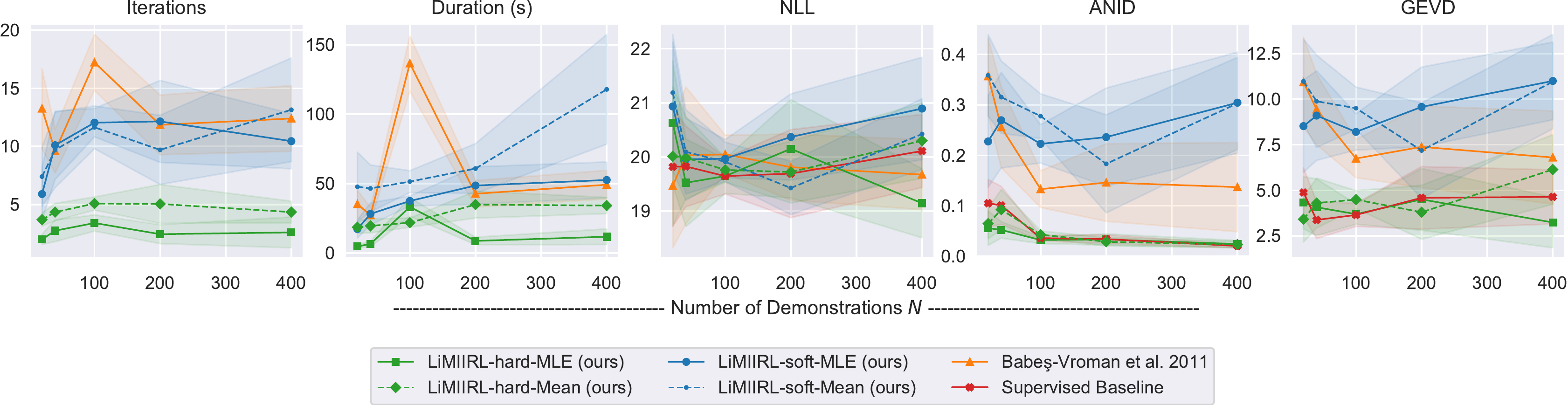}
    \caption[]{%
        Effect of varying number number of demonstrations $N$.
    }
    \label{fig:ew-sweeps:num-demos}
\end{figure}

We observed that the number of iterations stays fairly constant, whereas the wall time duration increases, suggesting that the M-Step (IRL reward solving), not the E-Step (updating cluster allocations) is the more burdensome part of the MI algorithm.
LiMIIRL with hard clustering generally outperforms the other initialisation strategies.

\paragraph{Cluster separation.}
We tested the algorithm with deterministic dynamics as well as with wind factor $w \in \{0.05, 0.1, 0.15, 0.2\}$, where larger wind values correspond to the behaviour intents being less separated in feature space (progressive violation of Assumption A).

In \cref{fig:ew-assumption-1}, we plot the mean of the inter-cluster margin $\E_{\{\Tau, \Tau'\}}[\min_{\tau \sim \Tau, \tau' \sim \Tau'} \|\phi(\tau) - \phi(\tau')\|]$ (where the expectation indicates all pairwise combinations of elements) with $E=3$ elements as the wind factor is increased.
This confirms that increasing the stochasticity of the transition dynamics leads to progressive violation of the cluster separation assumption.

\begin{figure}[h]
    \centering
    \makebox[\textwidth][c]{\includegraphics[width=0.6\linewidth]{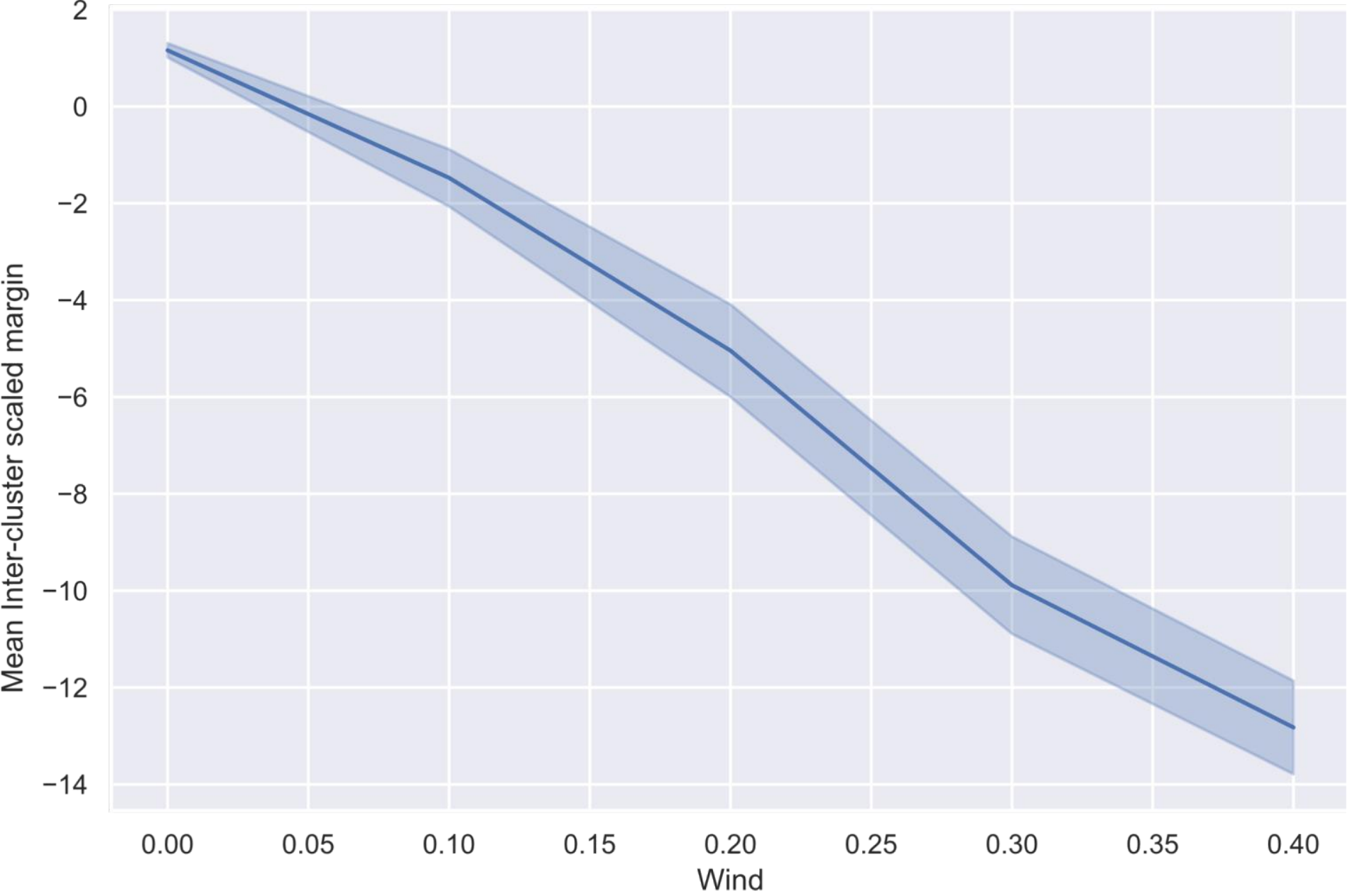}}
    \caption[]{%
        Assumption 1 is progressively violated as the \textit{ElementWorld} wind factor is increased.
        The plot shows the mean inter-cluster scaled margin with the 95\% confidence interval over 10 repeat experiments.
    }
    \label{fig:ew-assumption-1}
\end{figure}

\begin{figure}[h]
    \centering
    \makebox[\textwidth][c]{\includegraphics[width=\linewidth]{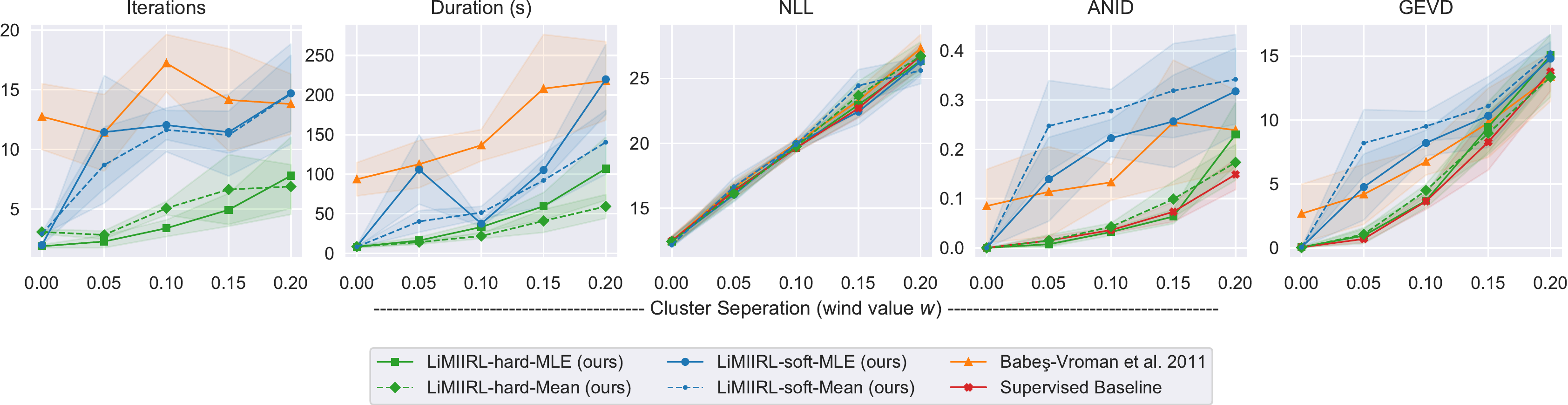}}
    \caption[]{%
        Effect of varying separation of the behaviour clusters (achieved by adjusting the `wind' factor $w$ in the dynamics)
    }
    \label{fig:ew-sweeps:cluster-separation}
\end{figure}

We observed that increasing the wind factor (violating Assumption 1 more) reduces the performance of all methods on all metrics, however in general, LiMIIRL initialisation with hard clustering still outperforms the other method.

\paragraph{Number of clusters.}
We fixed $E = 3$ and varied the number of learned clusters from $K = 1$ to $5$ to see how LiMIIRL performs when the number of learned clusters is mis-specified or unknown (\ie{} under- or over-clustering).

\begin{figure}[h]
    \centering
    \makebox[\textwidth][c]{\includegraphics[width=\linewidth]{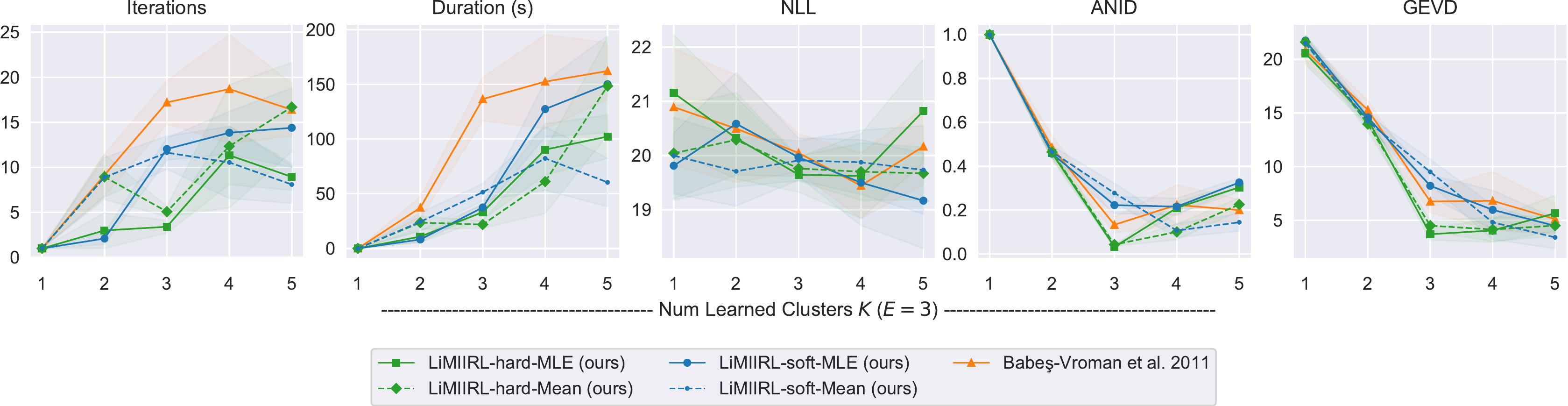}}
    \caption[]{%
        Effect of varying numbers of clusters $K$ (we fix $E=3$).
    }
    \label{fig:ew-sweeps:num-clusters}
\end{figure}

We observe that our LiMIIRL initialisation strategy appears most helpful when the number of clusters is under, or correctly specified for the environment.
We note that while the mixture NLL generally increases proportional to $K$, the ANID and GEVD metrics reach a minima / asymptote respectively at $K=3$ -- this confirms that these measures are suitable for evaluating reward ensemble performance, even for the case when the number of learned clusters is mis-specified.

We also note that in practice, the value of $K$ can be estimated using heuristics such as the Bayesian Information Criteria, using a non-parametric clustering as a pre-processing step, or using cross-fold validation.

\clearpage
\paragraph{Number of elements.}
We generated ElementWorld configurations with $K = E \in \{2, 3, 4, 5, 6\}$ to test the performance of our method as the number of behaviour intents increases.

\begin{figure}[h!]
    \centering
    \makebox[\textwidth][c]{\includegraphics[width=\linewidth]{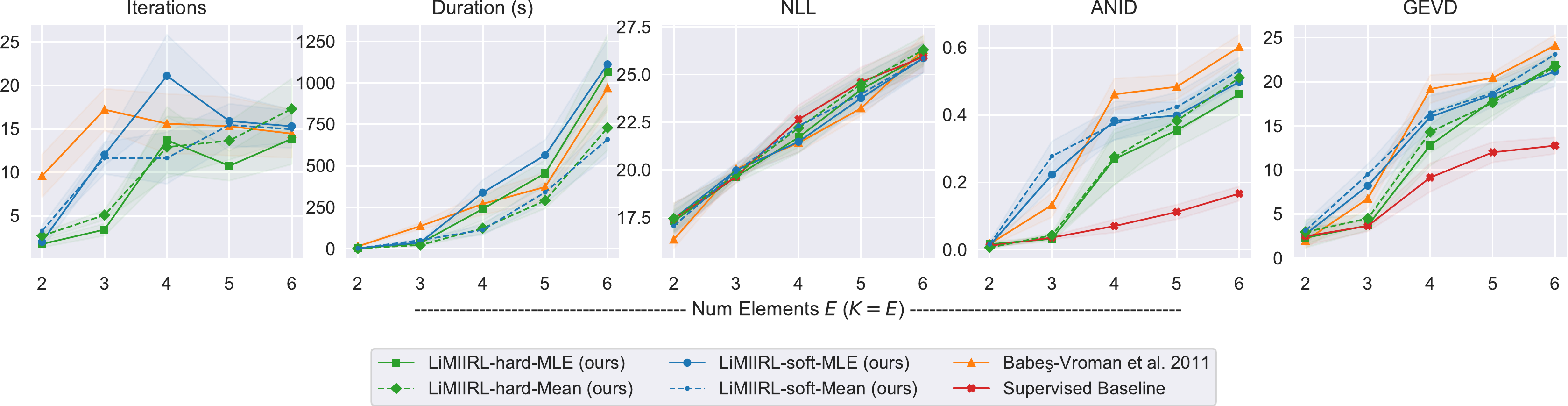}}
    \caption[]{%
        Effect of varying number number of ground truth elements $E$ (we fix $K = E$)
    }
    \label{fig:ew-sweeps:num-elements}
\end{figure}

We observe that, as expected, the performance of all metrics (except the supervised baseline) gets worse as the number of behaviour intents increases.
LiMIIRL with hard clustering appears to provide even more benefit over other initialisation strategies, especially for large numbers of behaviour intents.

\paragraph{Cluster imbalance.}
We tested the performance of our method when the behaviour intentions are not uniformly present in the demonstration dataset.
Instead of sampling demonstration policies uniformly, we selected demonstration policies from the ground truth ensemble according to a normalized geometric distribution $\rho_{k} \propto p (1 - p)^{k}$, varying the parameter $p \in \{0.0, 0.1, 0.2, 0.3, 0.4\}$.

\begin{figure}[h!]
    \centering
    \makebox[\textwidth][c]{\includegraphics[width=\linewidth]{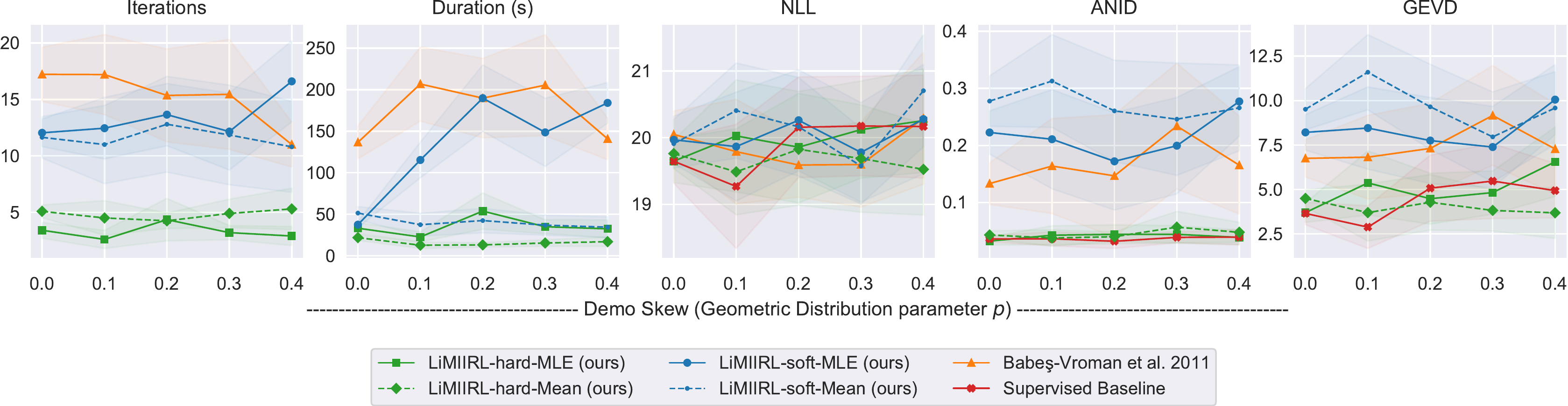}}
    \caption[]{%
        Effect of varying a varying degree of demonstration data cluster imbalance (achieved by sampling demonstrations from the ground truth ensemble under a geometric distribution with parameter $p$).
    }
    \label{fig:ew-sweeps:cluster-imbalance}
\end{figure}

We observe that LiMIIRL with hard clustering appears once again to outperform the other initialisation strategies, even in the presence of heavily skewed cluster sizes.
Interestingly, the soft initialisation (based on a Gaussian Mixture Model) appears particularly sensitive to cluster imbalance, suggesting this method should be avoided when the true cluster distribution is unknown.

\clearpage
\section{Driver Forecasting Experiment: Learned Reward Ensembles}

Below we include plots that visualise the best learned reward ensemble for each of the algorithms LiMIIRL-hard, LiMIIRL-soft, and for the EM algorithm with random initialization \cite{Babes-Vroman2011}.
Each figure is divided into sections corresponding to the specific feature - e.g. road type, or geographical region.
Within each section, the x-axis shows the feature dimensions, and the plot indicates the learned reward parameter for that feature dimension.
Each figure shows a plot for each reward in the learned ensemble, and the plot thickness is proportional to the ensemble component weight $\rho_k$.

\begin{figure}[p]
    \vspace{-40pt}
    \centering
    \begin{subfigure}{0.7\linewidth}
        \centering
        \includegraphics[width=\linewidth]{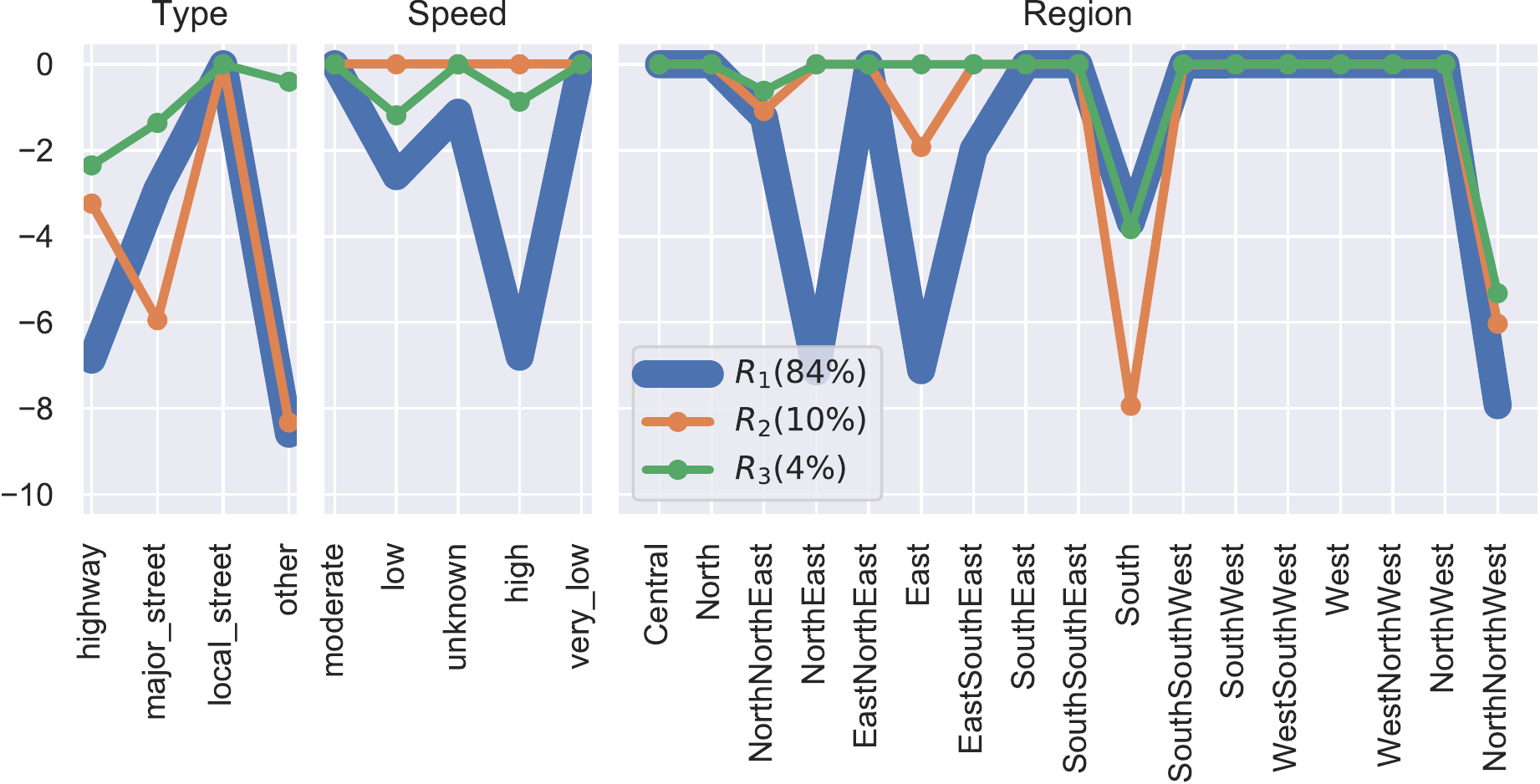}
        \caption[]{%
            Best learned reward ensemble for LiMIIRL-hard.
        }
    \end{subfigure}%
    \vspace{5pt}
    
    \begin{subfigure}{0.7\linewidth}
        \centering
        \includegraphics[width=\linewidth]{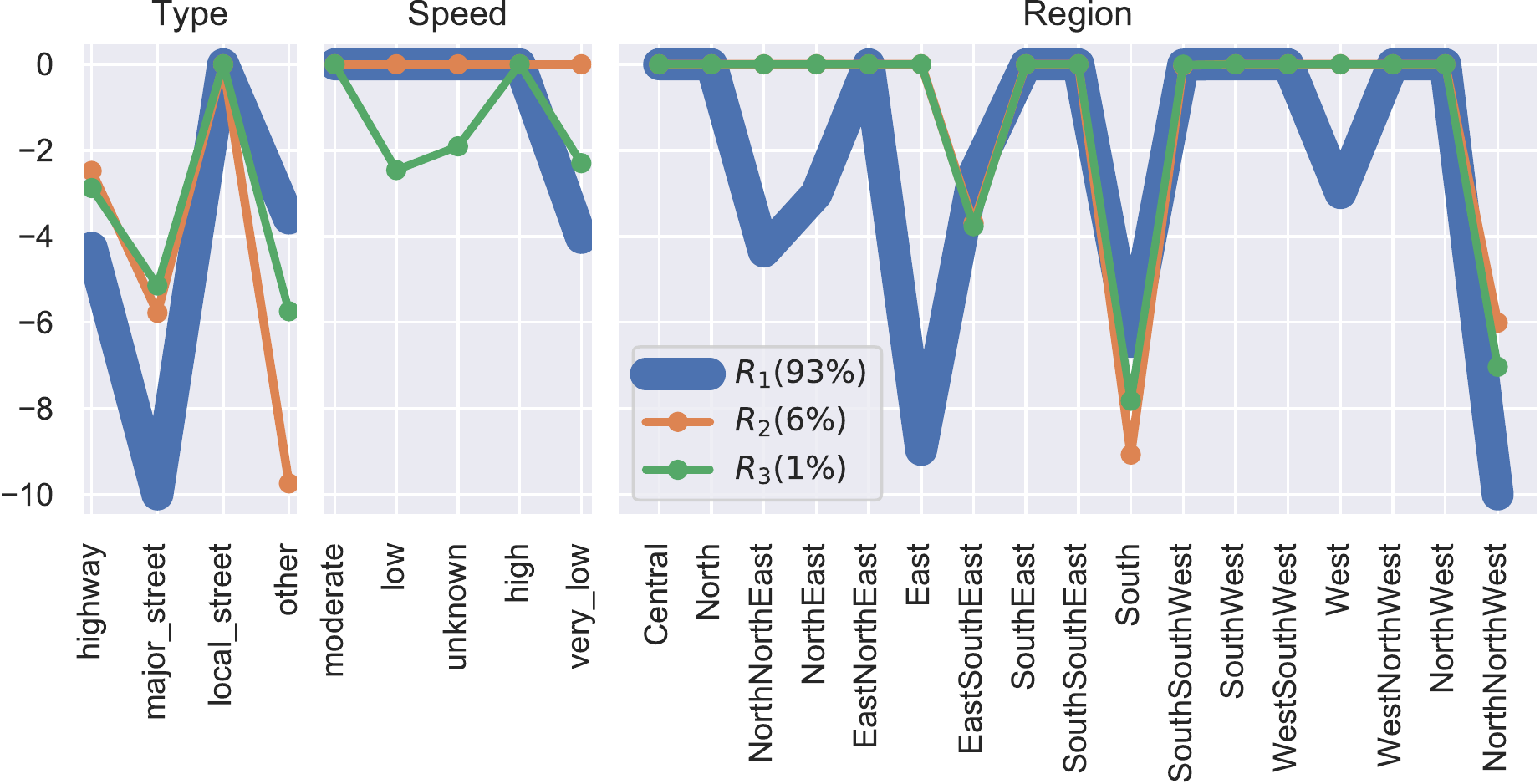}
        \caption[]{%
            Best learned reward ensemble for LiMIIRL-soft.
        }
    \end{subfigure}%
    \vspace{5pt}
    
    \begin{subfigure}{0.7\linewidth}
        \centering
        \includegraphics[width=\linewidth]{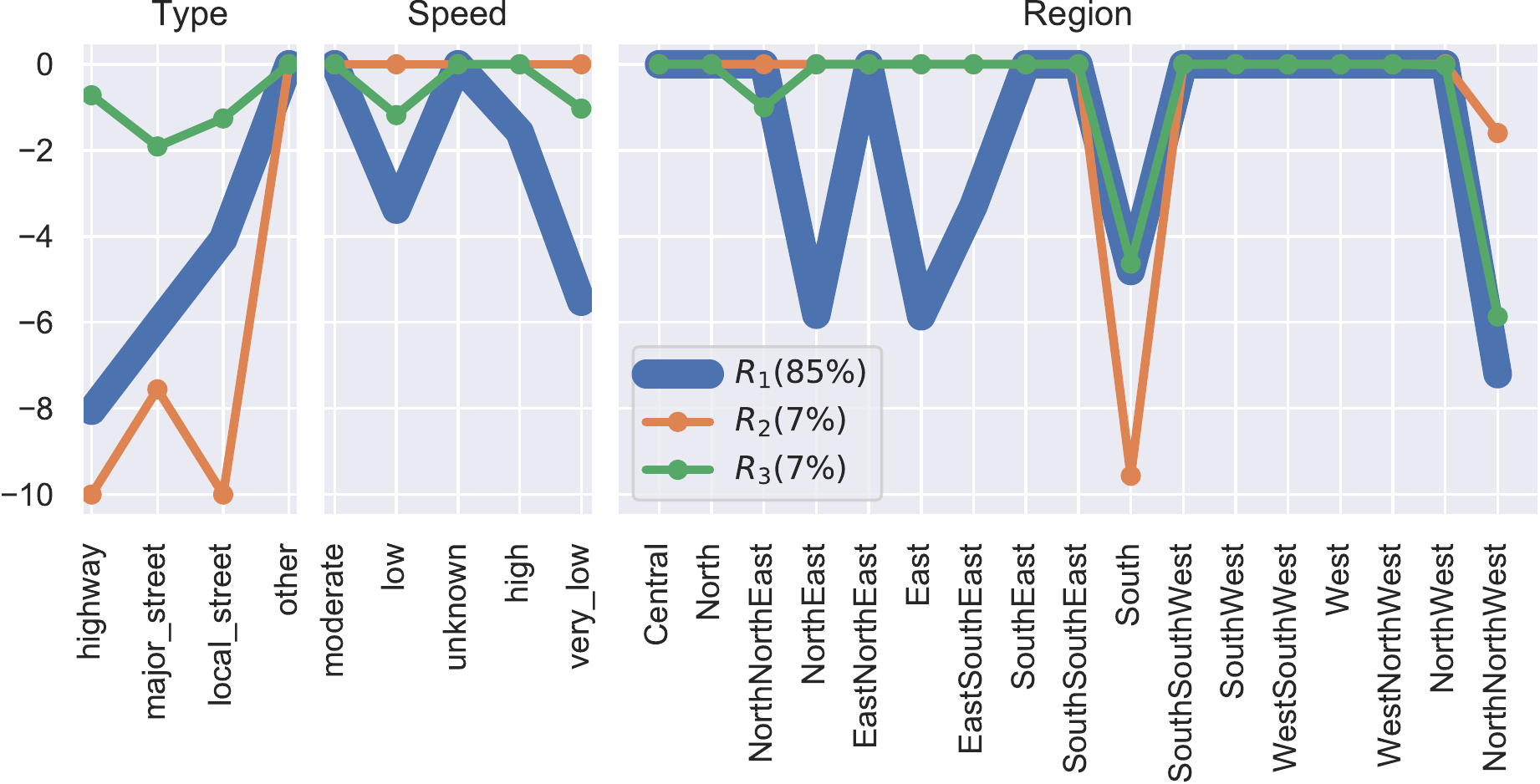}
        \caption[]{%
            Best learned reward ensemble for \citet{Babes-Vroman2011}.
        }
    \end{subfigure}%
    \caption[]{%
        Learned reward ensembles for the \textit{Porto} driver behaviour forecasting experiment.
    }
    \label{fig:ew-sweeps}
\end{figure}

\end{document}